\documentclass[letterpaper]{article} 
\usepackage{aaai24}  

\usepackage[font={small}]{caption}
\usepackage[font={scriptsize}]{subcaption}
\usepackage{paralist}
\usepackage{enumitem}
\usepackage{sidecap}
\usepackage{amssymb}
\usepackage{amsmath}
\usepackage{amsthm}
\usepackage{amsfonts}
\usepackage{dsfont}
\usepackage{bm}
\usepackage{footmisc}

\usepackage{times}  
\usepackage{helvet}  
\usepackage{courier}  
\usepackage[hyphens]{url}  
\usepackage{graphicx} 
\urlstyle{rm} 
\usepackage{natbib}  
\usepackage{caption} 
\frenchspacing  
\setlength{\pdfpagewidth}{8.5in} 
\setlength{\pdfpageheight}{11in} 

%
\usepackage{algorithm}
\usepackage{algorithmic}

\usepackage{color}
\usepackage{xcolor}

\definecolor{navy}{rgb}{0,0,0.0}
\definecolor{airforceblue}{rgb}{0.36, 0.54, 0.66}
\definecolor{mydarkblue}{rgb}{0,0.08,0.45}

\newcommand{\bx}{{\mathbf x}}
\newcommand{\Ec}{\mathcal{E}}
\newcommand{\iX}{\mathbb{X}}
\newcommand{\iT}{\mathbb{T}}

\newcommand{\by}{\bm{y}}

\newcommand{\bW}{\bm{w}}
\newcommand{\X}{\mathbb{X}}
\newcommand{\R}{\mathbb{R}}

\newcommand{\E}{\mathbb{E}}

\newcommand{\bE}{\bm{e}}
\newcommand{\var}{\mathrm{var}} 

\newcommand{\Cr}{\mathbb{C}}

\DeclareMathOperator*{\argmax}{argmax}
\DeclareMathOperator{\argmin}{argmin}

\newcommand{\hW}{{\bW}}
\newcommand{\dW}{\Delta\bW}
\newcommand{\DW}{\Delta}
\newcommand{\SXX}{\Sigma_{\scriptscriptstyle XX}}
\newcommand{\SYX}{\Sigma_{\scriptscriptstyle YX}}

\newcommand{\hlt}[1]{{\color{airforceblue} #1}}
\newcommand{\texp}[1]{\quad\mathrm{\hlt{(#1)}}}

\newcommand{\myparagraph}[1]{\smallskip\noindent\textbf{#1}}

\newtheorem{lemma}{Lemma}

\newtheorem{defn}{Definition}

\newtheorem*{result*}{Theorem}

%
\usepackage{newfloat}
\usepackage{listings}
\DeclareCaptionStyle{ruled}{labelfont=normalfont,labelsep=colon,strut=off} 
\lstset{%
	basicstyle={\footnotesize\ttfamily},
	numbers=left,numberstyle=\footnotesize,xleftmargin=2em,
	aboveskip=0pt,belowskip=0pt,%
	showstringspaces=false,tabsize=2,breaklines=true}
\floatstyle{ruled}
\newfloat{listing}{tb}{lst}{}
\floatname{listing}{Listing}
%
\pdfinfo{
/TemplateVersion (2024.1)
}

\setcounter{secnumdepth}{2} 

%


\title{United We Stand: Using Epoch-wise Agreement of Ensembles to Combat Overfit\thanks{This work was supported by grants from the Israeli Council of Higher Education and the Gatsby Charitable Foundation.}}
\author{
    Uri Stern, Daniel Shwartz, Daphna Weinshall
}
\affiliations{


    School of Computer Science and Engineering, The Hebrew University of Jerusalem, Jerusalem 91904, Israel\\
    ustern@gmail.com, Daniel.Shwartz1@mail.huji.ac.il, daphna@mail.huji.ac.il\\
%
}

\begin{document}

\maketitle

\begin{abstract}

Deep neural networks have become the method of choice for solving many classification tasks, largely because they can fit very complex functions defined over raw data. The downside of such powerful learners is the danger of overfit. In this paper, we introduce a novel ensemble classifier for deep networks that effectively overcomes overfitting by combining models generated at specific intermediate epochs during training. Our method allows for the incorporation of useful knowledge obtained by the models during the overfitting phase without deterioration of the general performance, which is usually missed when early stopping is used.

To motivate this approach, we begin with the theoretical analysis of a regression model, whose prediction -- that the variance among classifiers increases when overfit occurs -- is demonstrated empirically in deep networks in common use. Guided by these results, we construct a new ensemble-based prediction method, where the prediction is determined by the class that attains the most consensual prediction throughout the training epochs. Using multiple image and text classification datasets, we show that when regular ensembles suffer from overfit, our method eliminates the harmful reduction in generalization due to overfit, and often even surpasses the performance obtained by early stopping. Our method is easy to implement and can be integrated with any training scheme and architecture, without additional prior knowledge beyond the training set. It is thus a practical and useful tool to overcome overfit. Code is available at \url{https://github.com/uristern123/United-We-Stand-Using-Epoch-wise-Agreement-of-Ensembles-to-Combat-Overfit}.

\end{abstract}
\section{Introduction}

Deep supervised learning has achieved exceptional results in various image recognition tasks in recent years. This impressive success is largely attributed to two factors -- ready access to very large annotated datasets, and powerful architectures involving a huge number of parameters, thus capable of learning very complex functions from large datasets. However, when training very large models, we face the risk of \emph{overfit} -- when models are fine-tuned to irrelevant details in the training set. It is defined here as the co-occurrence of \emph{increase} in training accuracy and \emph{decrease} in test accuracy, which implies poorer generalization as training proceeds. Not surprisingly, given the gravity of this issue, a variety of measures have been developed through the years to combat overfit, as reviewed in Section~\ref{sec:previous-work}.

One important cause of overfit in deep learning is the existence of false labels in the dataset (also termed "noisy labels"), which the network nevertheless is able to memorize. Empirically, such noisy labels are known to be memorized late by deep networks \citep{arpit2017closer}, resulting in overfit - errors that occur later in training. As deep learning requires large annotated datasets, the problem of false labels often arises in real applications, medical data being an important example \citep{karimi2020deep}, where the reliable labeling of data is expensive. Moreover, cheap alternatives such as crowd-sourcing or automatic labeling typically introduce noisy labels to the dataset \citep{nicholson2016label}. It is thus important to develop additional tools for combating overfit, especially heavy overfit caused by noisy labels, which is our goal in this paper. 

\begin{figure}[t!]
    \centering
    \begin{subfigure}[b]{0.595\linewidth}
        \includegraphics[width=\linewidth]{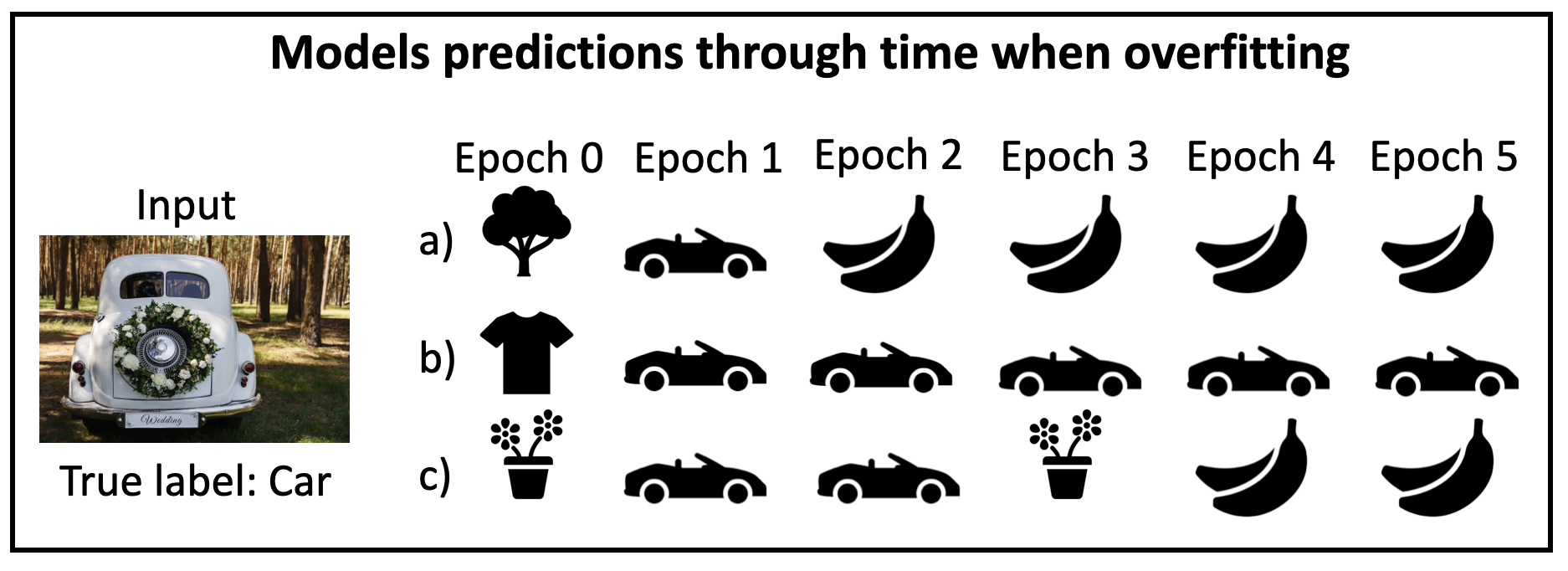}
    \vspace{-.5cm}
        \caption{Illustrative example with 3 models}
        \label{subfig:predictionvariance}
    \end{subfigure}
    \begin{subfigure}[b]{0.395\linewidth}
        \includegraphics[width=\linewidth]{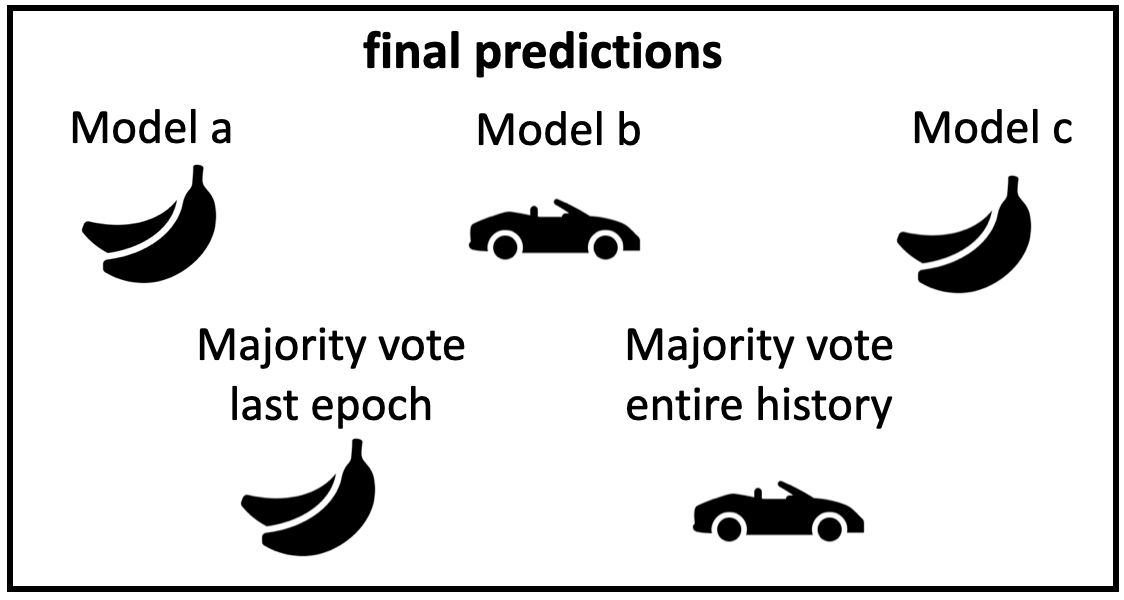}
    \vspace{-.5cm}
        \caption{Final predictions}
        \label{subfig:finalpredictions}
    \end{subfigure}
    
     \caption[motivation]{An illustrative example, showing the hypothetical behavior of an ensemble of size 3. (a) Visualization of each model's prediction over time, given as input a car image falsely labeled as a banana. (b) The final prediction of the 3 models (top row), regular ensemble (bottom left), and our method (bottom right).} 
     \label{fig:methodillustration}
     \vspace{-1.5em}
\end{figure}

Many methods have been developed to reduce overfit (See Section~\ref{sec:previous-work}), often by adding some form of regularization to the network's training. One important example is \emph{early stopping}, where the training is concluded ahead of time to avoid overfit. The problem with early stopping is that models can learn useful features from the data even as they are overfitting, which raises the need for additional methods that reduce overfit without limiting the training or the model.

Our proposed method belongs to the family of deep ensemble classifiers. Differently from most methods, our new ensemble classifier does not only consider the final predictions of the networks in the ensemble, but also tracks the networks' responses as learning proceeds. In this context, we recall recent empirical findings \citep{hacohen2020let,pliushch2021deep}, which show that when the predictions of networks over the train and test data are tracked through time, almost all the networks will either predict correctly the correct label or almost all will fail. These findings imply that deep networks demonstrate high agreement per epoch in their predictions of correct labels. But while all the networks are shown to succeed together, they may still fail in different ways, as visualized in Fig.~\ref{fig:methodillustration}. Here we pursue the hypothesis that \textbf{such diversity can be exploited to distinguish between false predictions and correct ones}, most effectively when overfit occurs.

Our work begins by investigating the hypothesis mentioned above, where we study networks' agreement over false prediction both theoretically and empirically. We start with the theoretical analysis of a regression model 
(Section~\ref{subsec:theoreticalanalysis}), which reinforces the intuition that \emph{overfit increases the prediction variance between linear models}. Accordingly, we make two conjectures: \begin{inparaenum}[(i)] \item ensembles become more effective when overfit occurs; \item the agreement between networks is reduced when overfit occurs\end{inparaenum}.  These conjectures are verified empirically in Section~\ref{subsection:empiricaloverfit} while evaluating deep networks in common use. We see that ensemble classifiers indeed improve performance when overfit occurs. However, overfit is \textbf{not} eliminated. Conjecture (ii) is validated with a \emph{new empirical result}: the variance of correct test predictions 
is much smaller than the variance of false predictions.

The aforementioned empirical result implies that correct predictions at inference time can be distinguished from false ones by looking at the \emph{variance in predictions between multiple networks throughout the training epochs}. This is used to construct a new algorithm for ensemble-based prediction, which is much more resistant to overfit (see Section~\ref{sec:alg}). This method is then tested on different image classification and text classification datasets with label noise where the networks manifest significant overfit (see Section \ref{sec:evaluation}). In these experiments we see two positive effects of our method: (i) overfit is eliminated in almost all the experiments (except in extreme cases of label noise), showing that the method is  \emph{robust against overfit}; (ii) its performance is superior to the ensemble's best epoch, when identified for early stopping. 

To conclude, our \textbf{main contribution} is a new method for ensemble-based prediction, which is resistant to overfit, and does not require any prior knowledge on the data or changes to the training protocol. It is applicable with any network and dataset, easy to implement, and is effective \emph{as a practical tool against overfit}. Most importantly, when the models are still able to learn useful patterns from the data after the occurrence of overfit, for example, when the test error shows "double descent" \citep[see][]{stephenson2021and, nakkiran2021deep, stern2023relearning}, our methods allows the user to use this forgotten knowledge while reducing overfit, outperforming the optimal early stopping. Finally, our method can be readily incorporated with existing methods, which are designed to combat overfit, in order to boost their performance, as shown in Section~\ref{sec:ablation}.

\section{Prior Work}
\label{sec:previous-work}

Previous work on combating overfit can be largely divided into two groups, which are \textbf{not} mutually exclusive. \textbf{The first group}, which includes the majority of works, focuses on modifying \emph{the learning process} in order to prevent or reduce overfit. These include various forms of data augmentation \citep{shorten2019survey} and regularization techniques, such as dropout \citep{srivastava2014dropout}, weight decay \citep{krogh1991simple}, batch normalization \citep{ioffe2015batch} and early stopping - the termination of training before overfit occurs. Such methods come at a cost, as they (and specifically early stopping) often limit the ability of the model to learn useful features from the training data in the late stages of learning, and even in the overfit phase, wherein useful features can still be learned by the model \cite{stern2023relearning}. Many methods have also been suggested for training under the presence of label noise, a known cause for overfit, see \citep{song2022learning} for a recent review. 
    
\textbf{The second group}, to which our method belongs, comes into play after learning is concluded. Such methods involve the design of post-processing algorithms to be invoked during \emph{inference time} \citep[e.g.,][]{lee2019robust}. Relevant work, when dealing with label noise, includes \citep{zhang2021learning,bae2022noisy}. An ensemble classifier is used in \citep{salman2019overfitting} to combat overfit due to insufficient training data, using network confidence to filter out erroneous predictions. Our method differs primarily in that it provides labels for all points, implicitly correcting false labels of regular ensembles, which is achieved by relying on a novel analysis of the ensemble's predictions. 
    
One benefit of the second approach is that it doesn't require additional time for training, e.g., by way of re-training or data-specific hyper-parameter tuning, which is a major drawback of the first approach. Additionally, methods from the second group can be incorporated into any training scheme, which is useful when methods from the first group fail to completely eliminate overfit. Lastly, methods from this group do not prevent the model from learning useful features at the late stages of training, as some of the first group methods do.

\textbf{Ensemble methods.}
Engaging a set of classifiers, rather than a single one, is an established methodology to deal with errors \citep{hansen1990neural}. Ensembles of deep neural networks are typically aggregated by simple yet useful methods. The two most widely used aggregation methods are the \emph{majority vote} and \emph{class probabilities average} \citep{lakshminarayanan2017simple} of the ensemble. 

Generating diversity in the ensemble can be done in a variety of ways \citep{li2018research, ganaie2021ensemble}, such as training with different datasets, as in bagging \citep{breiman1996bagging} and boosting \citep{freund1997decision}, or by varying weights and hyper-parameters \citep{wenzel2020hyperparameter}. A natural cause of diversity is learning from noisy labels, as illustrated in \citep{shwartz2022dynamic}, where ensembles of neural networks are used to identify such labels. 
As ensembles are expensive, some papers focus on minimizing their cost rather than maximizing their performance, see for example \cite{wen2020batchensemble}.

\section{Methodology}
\label{sec:methodology}

\paragraph{Notations} Let $N$ denote the number of networks in an ensemble of deep neural networks 
trained independently 
on a training dataset with $C$ classes for $E$ epochs using Stochastic Gradient Descent (SGD), and tested after each epoch of training on the test set $\iT$. Let $f_i^e(x)$ denote the prediction of network $i$ for $\bx\in\iT$ in epoch $e$. $N^e(x)$ denotes the ensemble prediction in each epoch $e$, defined as:
\begin{eqnarray}
\label{eq:highestvote}
N^e(\bx) = \argmax_{c \in C}\sum_{i=1}^{N}\mathds{1}_{[f_{i}^{e}(\bx) = c]}
\end{eqnarray}

\paragraph{Label noise generates overfit} As deep models can memorize any data distribution, having false labels in the training set (also termed noisy labels) leads networks to overfit, since their test accuracy decreases by the end of training when false labels are being memorized. We thus expose our models to data with label noise in order to evaluate robustness to heavy overfit. We evaluate our method on: \begin{inparaenum}[(i)] \item datasets with synthetic label noise, \item real-world datasets with label noise crafted via unreliable web-labeling (Clothing1M, Webvision), and \item a dataset with inherent label noise due to inherently confusing data for annotators 
(Animal10N). \end{inparaenum}
 
\paragraph{Injecting label noise} To generate data with synthetic label noise, we use  two standard noise models \citep{patrini2017making}:
\begin{enumerate}
[leftmargin=0.65cm,noitemsep]
\item \textbf{Symmetric noise:} a fraction $p \in \{0.2, 0.4, 0.6\}$ of the labels is selected randomly. Each selected label is switched to any other label with equal probability. 
\item \textbf{Asymmetric noise:} a fraction $p$ of the labels is selected randomly. Each selected label is switched to another label using a deterministic permutation function. 
\end{enumerate}

\paragraph{Inter-Model Agreement} In order to measure agreement between different models during inference time, we define below in (\ref{eq:Agreement}) an Agreement score. This score measures the average fraction of networks in the ensemble that predict class $c$ at point $\bx$ in a set of training epochs $\Ec$.
\begin{eqnarray}
\label{eq:Agreement}
Agr(\bx,c) =  \frac{1}{N\vert\Ec\vert}\sum_{e\in\Ec}\sum_{i=1}^{N}\mathds{1}_{[f_{i}^{e}(\bx) = c]}
\end{eqnarray}

\section{Overfit and Ensemble Classifiers}
\label{sec:learningdynamics}

In this section we study, theoretically and empirically, the dynamics of ensembles when overfit actually occurs (further motivation can be found in App.~\ref{subsec:motivation}). We begin with a theoretical analysis of a regression model of linear classifiers (Section~\ref{subsec:theoreticalanalysis}), showing that the agreement between such classifiers decreases when overfit occurs. In Section~\ref{subsection:empiricaloverfit} we empirically study the relevance of this result to deep learning, showing that the same phenomenon occurs also with deep neural networks. We conclude in Section~\ref{subsec:rememberhistory} with the analysis of the onset time of correct and false predictions. This empirical analysis shows that false predictions caused by overfit are associated with lower agreement scores as compared to the correct predictions, implying that false predictions are less common throughout the training than the corresponding correct predictions
. Later in Section~\ref{sec:alg}, these results guide the construction of a new ensemble-based prediction algorithm, which is resistant to overfit. 

\subsection{Overfit in linear models: theoretical result}
\label{subsec:theoreticalanalysis}

Since deep learning models are difficult to analyze theoretically, common practice invokes simple models (such as linear regression) that can be theoretically analyzed, whose analysis may shed light on the behavior of actual deep models. Accordingly, 
we formally analyze an ensemble of linear regression models trained using gradient descent via the perspective of their agreement. 
Our analysis culminates in a theorem, which states that the agreement between linear regression models decreases when overfit occurs in all the models and their individual generalization error increases.  
\begin{result*}
Assume that all models are trained using the same training set, and all suffer from overfit at the same time. Then, under reasonable assumptions, the disagreement between the models increases at the time overfit occurs.
\end{result*}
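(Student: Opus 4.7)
The plan is to reduce the statement to a bias--variance seesaw argument driven by the closed form of gradient descent on the least-squares loss. First I would formalize the setup as an ensemble of $N$ linear models with parameters $\bw_i^t \in \R^d$, i.i.d.\ initializations $\bw_i^0$ drawn from a symmetric (say isotropic) distribution, and a common GD update on a fixed training pair $(X,y)$. Writing $A = I - \eta X^{\top}X/n$, the closed form gives $\bw_i^t = A^t \bw_i^0 + c^t$ for a data-dependent but model-independent sequence $c^t$, so that the pairwise difference $\bw_i^t - \bw_j^t = A^t(\bw_i^0 - \bw_j^0)$ is driven purely by the shared spectral filter acting on the initialization gap.

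Next I would invoke the standard ensemble bias--variance identity,
\begin{equation*}
\E_i R_i^t \;=\; R_{\mathrm{ens}}^t \;+\; V^t,
\end{equation*}
where $R_i^t$ is the single-model test MSE, $R_{\mathrm{ens}}^t$ is the MSE of the ensemble-mean predictor $\bar f^t$, and $V^t = \E_{\bx}\,\mathrm{var}_i[f_i^t(\bx)]$ is the disagreement on the test distribution. The overfit hypothesis says $R_i^t$ rises at the overfit time $t^{\star}$ for every $i$; by symmetry across initializations, $\E_i R_i^t$ rises as well. It therefore suffices to show that $R_{\mathrm{ens}}^t$ does not rise at $t^{\star}$, so the increase is absorbed entirely by $V^t$.

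To control $R_{\mathrm{ens}}^t$, I would expand both $V^t$ and $R_{\mathrm{ens}}^t$ in the eigenbasis of $X^{\top}X/n$, partition the eigendirections into a ``signal block'' and an ``overfit block'' (the latter being those directions in which the label-noise component of $y$ dominates the signal), and identify the onset of overfit with GD entering the overfit block. In that block, under the assumption that the label noise in $y$ is zero-mean and independent of the initializations, its contribution to $\bar f^t$ cancels on average while per-model contributions add in quadrature; formalising this should force $V^t$ upward at precisely $t^{\star}$ while leaving $R_{\mathrm{ens}}^t$ nearly flat.

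The main obstacle, I expect, is making this cancellation strict. With only initialization randomness and shared data, a direct calculation yields $V^t = \mathrm{tr}\bigl(A^{2t}\,\mathrm{Cov}(\bw_i^0)\,\Sigma_{\mathrm{test}}\bigr)$, which is monotonically non-increasing in $t$; so a pure closed-form spectral argument cannot deliver a strict rise on its own. The proof must therefore exploit a genuinely stochastic ingredient---label noise interacting with a non-isotropic initialization, per-model SGD noise, or a spectral mismatch between $\Sigma_{\mathrm{tr}}$ and $\Sigma_{\mathrm{test}}$---to produce a strict increase of $V^t$ at $t^{\star}$. Pinning down which of these plays the role of the theorem's ``reasonable assumptions,'' and then verifying the strict monotonicity of $V^t$ near $t^{\star}$, is where the real work of the proof will lie.
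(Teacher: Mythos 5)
Your high-level skeleton is essentially the one the paper's proof follows, just phrased through the ambiguity identity rather than by direct expansion. The paper's disagreement $DisAg(s)$ is exactly the per-point variance of the test error across models summed over test points (the target $\by(t)$ is common, so variance of error equals variance of prediction, i.e.\ your $V^t$ up to normalization), and its computation of $\Cr = DisAg(s)-DisAg(s-1)$ splits, to first order in $\mu$, into a cross term $\Cr'$ that is the change in the ensemble-mean risk (your $\Delta R_{\mathrm{ens}}$) and a diagonal term $-\mu\Cr''$ that equals the average change in the individual test risks $\frac{1}{Q}\sum_i \Delta R_i$. The overfit hypothesis enters through the paper's key Lemma~\ref{lem:overfit} --- overfit in model $i$ iff its training gradient $\dW(i)$ is negatively correlated with the test-set gradient $\DW(i,t)$ --- which makes every summand of $\Cr''$ negative. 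Where you diverge, and where the genuine gap lies, is the mechanism for keeping $R_{\mathrm{ens}}$ flat. You propose that zero-mean label noise, independent of the initializations, cancels in the ensemble average inside an ``overfit block'' of eigendirections. But in the theorem's setting all models are trained on the \emph{same} training set with the \emph{same} noisy labels: the noise is perfectly correlated across ensemble members, contributes identically to every $f_i^t$, and does not average out in $\bar f^t$. That cancellation step fails. The paper instead obtains $\Cr'\approx 0$ from two of its ``reasonable assumptions'': the step count $s$ is large enough that the mean model has converged, $\E[\hW]\approx\SYX\SXX^{-1}$ (Lemma~\ref{lem:2}), and $Q$ is large enough that the law of large numbers gives $\frac{1}{Q}\sum_i\dW(i)\approx\E[\hW]\SXX-\SYX\approx 0$; the average gradient step vanishes, so the ensemble-mean predictor is stationary to first order in $\mu$.

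Your closing observation --- that with shared data, full-batch GD and isotropic initialization the exact variance $\mathrm{tr}\bigl(A^{2t}\,\mathrm{Cov}(\bw_i^0)\,\Sigma_{\mathrm{test}}\bigr)$ is non-increasing, so no purely spectral closed-form argument yields a strict rise --- is correct and identifies the real tension. But the paper does not resolve it by introducing an extra stochastic ingredient, as you conjecture; it resolves it by working in an asymptotic regime (neglect $O(\mu^2)$ and $\Vert I-\mu\SXX\Vert^{s}$, take $Q$ large) in which only the first-order term $-\mu\Cr''$ survives and its sign is pinned down by Lemma~\ref{lem:overfit}. So to complete your proof along the paper's lines you would need to (i) replace the label-noise cancellation by the stationarity-of-the-mean-model argument, and (ii) supply the overfit-iff-negative-train/test-gradient-correlation lemma that converts ``all models overfit at step $s$'' into the sign condition on the surviving term. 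Neither is present in the proposal as written.
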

\noindent
Here is a brief sketch of the proof  (see App.~\ref{sec:overfit-app}):
\begin{enumerate}[leftmargin=0.65cm]
\item We measure \emph{Disagreement} by the empirical variance over models of the error vector at each test point, 
averaged over the test examples.
\item We prove the following (intuitive) Lemma~\ref{lem:overfit}: \emph{Overfit occurs in a model iff the gradient step of the model, which is computed from the training set, is negatively correlated with a vector unknown to the learner - the gradient step defined by the test set.}
\item We show that under certain asymptotic assumptions, the disagreement is approximately $-\rho$, where $\rho$ denotes the average correlation between each network's gradient step when computed using either the training set or the test set. It now readily follows from Lemma~\ref{lem:overfit} that if overfit occurs in all the models then the average correlation $\rho$ must be negative, and the aforementioned disagreement score increases.
\end{enumerate}

\subsection{Overfit in deep networks: empirical study}
\label{subsection:empiricaloverfit}

We discuss in the introduction known findings concerning ensembles of deep models, which imply that all networks are likely to succeed or fail together in their prediction. However, it is still possible that they fail in different ways, where each network predicts a different false label. With overfit in particular, they may favor different false labels at different epochs (see illustration in Fig.~\ref{fig:methodillustration}). 

We therefore start our empirical investigation by analyzing the variance of false predictions (or errors) in such ensembles. We use datasets injected with label noise in this study, in order to amplify the prevalence of overfit. The diversity in the ensembles has 2 sources: (i) different random initialization, and (ii) different random mini-batches within each epoch, where all networks train on the full datasets.

More specifically, we begin by training an ensemble of $N$ DNNs on noisy labeled datasets and then isolate the test points with erroneous prediction in the last epoch. Using this set of points, we compute the \emph{Error Consensus Score (ECS)}, which measures the number of networks that agree on each erroneous prediction (from 1 to $N$). In Fig.~\ref{fig:mistakes} we plot some empirical histograms of \emph{ECS}.

\begin{figure}[htbp]
    \begin{subfigure}[b]{0.32\linewidth}
    \centering
        \includegraphics[width=\linewidth]{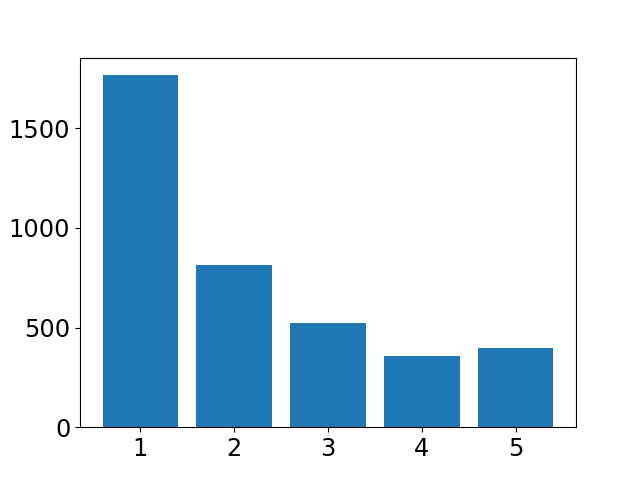}
        \caption{Cifar10 20\% AsN}
        \label{subfig:mistakesc10asym20}
    \end{subfigure}
    \begin{subfigure}[b]{0.32\linewidth}
    \centering
        \includegraphics[width=\linewidth]{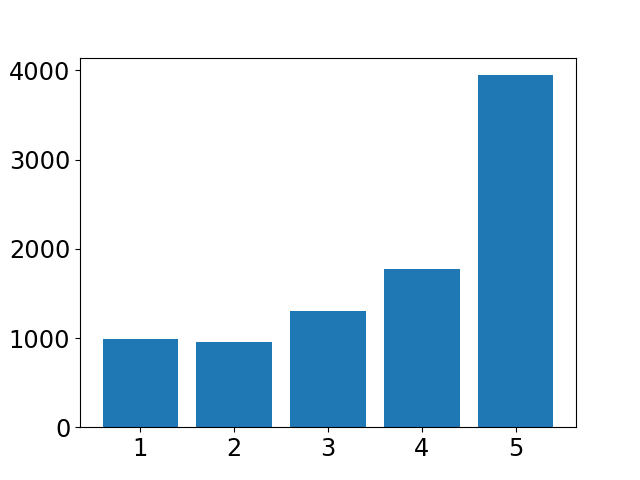}
        \caption{Cifar100 60\% SN}
        \label{subfig:mistakesc100sym60}
    \end{subfigure}
    \begin{subfigure}[b]{0.32\linewidth}
    \centering
        \includegraphics[width=\linewidth]{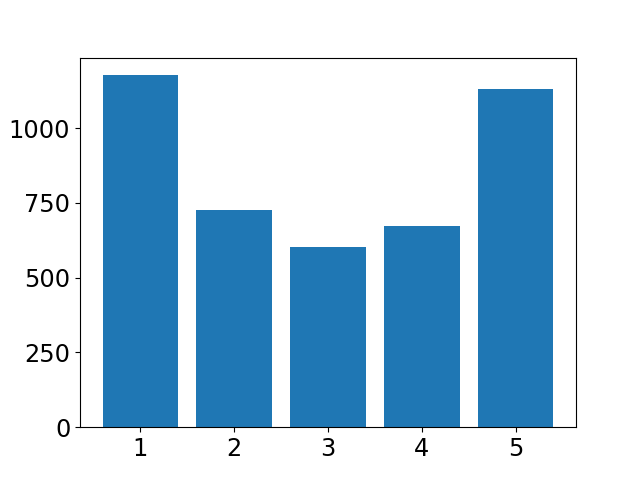}
        \caption{Cifar100 10\% AsN}
    \end{subfigure}
     \caption[mistakes]{Histograms of \emph{ECS} - the number of networks that agree on each erroneous prediction, for an ensemble of 5 networks. 'SN' denotes symmetric noise, and 'AsN' asymmetric noise. 
     } 
     \label{fig:mistakes}
     \vspace{-1.0em}
\end{figure}

The results in Fig.~\ref{fig:mistakes} indicate that when erroneous predictions are concerned, their distribution seems to have a large variance. In fact, in many of our study cases, the minority value ($\emph{ECS}<\frac{N}{2}$) dominates the distribution. We also see that in more difficult settings (e.g., Cifar100), more mistakes are made by the majority, making the ensemble unable to correct them. We therefore conjecture that ensemble classifiers will be effective in correcting erroneous predictions in cases of overfit. In addition, we conjecture that with minor overfit, ensembles will only make a little difference.

\begin{figure}[htbp]
    \begin{subfigure}[b]{0.495\linewidth}
    \centering
        \includegraphics[width=\linewidth]{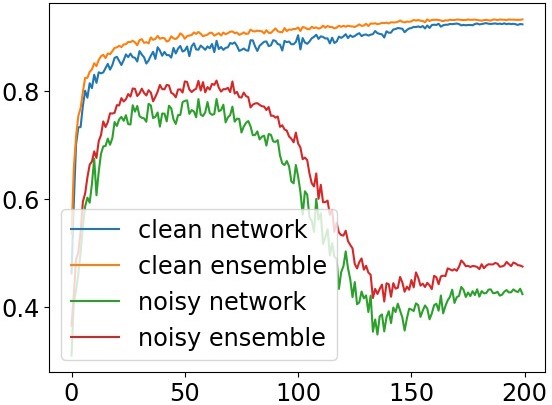}
        \caption{Cifar10, 60\% symmetric noise}
        \label{subfig:ensemblesc100sym60}
    \end{subfigure}
    \begin{subfigure}[b]{0.495\linewidth}
    \centering
        \includegraphics[width=\linewidth]{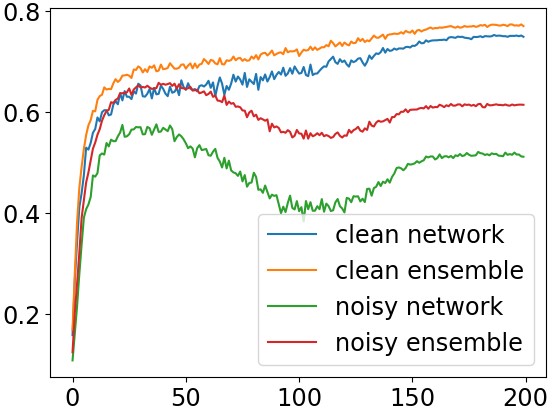}
        \caption{Cifar100, 40\% sym noise}
    \end{subfigure}
     \caption[ensembles]{Test accuracy over the epochs of a single network (green) and ensemble (red) trained on a dataset with label noise. For comparison, we also show the test accuracy of a single network (blue) and ensemble (orange) trained on the clean data only.
     } 
     \label{fig:ensembles}
     \vspace{-1.0em}
\end{figure}

These conjectures are supported by our empirical results, in experiments involving various datasets and noise settings, as shown in Fig.~\ref{fig:ensembles} and Table~\ref{table:synthesized}. In particular, Fig.~\ref{fig:ensembles} clearly shows that the benefit of the ensemble classifier is much larger when there is significant overfit, which is seen when the training data has significant label noise. However, Fig.~\ref{fig:ensembles} also shows that \textbf{while ensembles improve accuracy when there is overfit, they do not eliminate the phenomenon} - there is still performance deterioration as training proceeds. The "rebound" in test accuracy of the single network is due to the "epoch-wise double descent" phenomenon, see details in \citep{stephenson2021and, nakkiran2021deep}.

\subsection{Remember your history and avoid overfit}
\label{subsec:rememberhistory}
\begin{figure}[htbp]
    \centering
    \begin{subfigure}[b]{0.32\linewidth}
        \includegraphics[width=\linewidth]{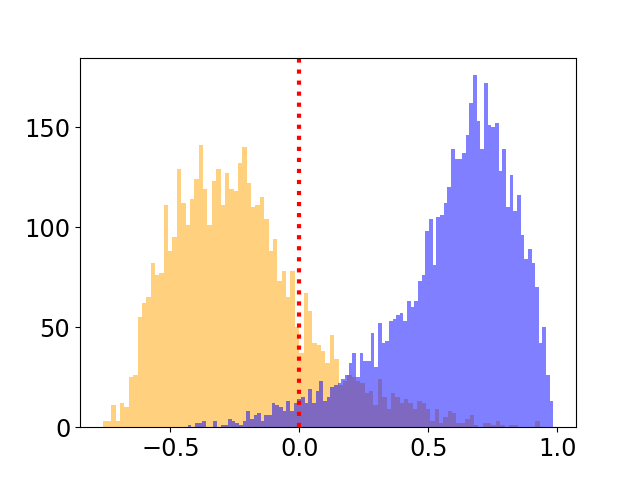}
        \caption{Cifar10, 60\% SN}
        \label{subfig:agreementmarginc10asym60}
    \end{subfigure}
    \begin{subfigure}[b]{0.32\linewidth}
        \includegraphics[width=\linewidth]{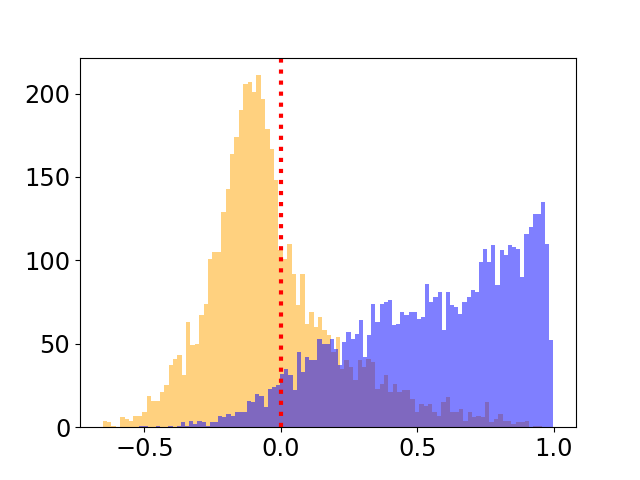}
        \caption{TinyImg, 40\% SN}
    \end{subfigure}
    \begin{subfigure}[b]{0.32\linewidth}
        \includegraphics[width=\linewidth]{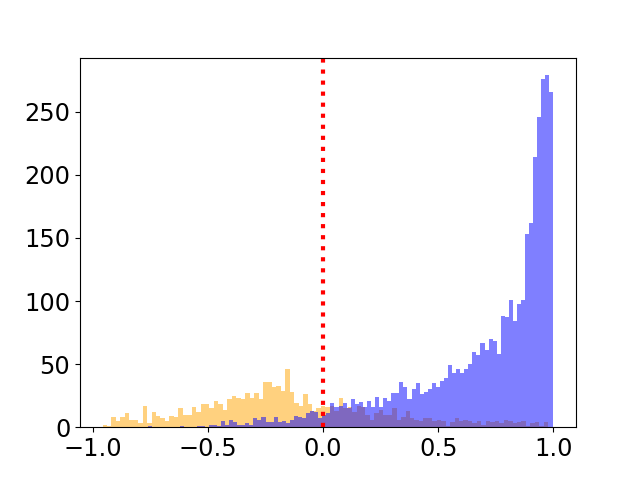}
        \caption{Imagenet100, 40\% SN}
    \end{subfigure}
     \vspace{-0.5em}
     \caption[margin histogram]{Histograms of the margin score (\ref{eq:AgreementMargin}), where blue indicates correct predictions and orange erroneous predictions.}
     \label{fig:marginhist}
\end{figure}

Ensembles are most effective when the predictions of the different members of the ensemble show a large variance, as discussed above. In this section, we inspect another potential source of variability - a prediction's persistence. 
Following our theoretical and empirical analysis and the agreement score defined in (\ref{eq:Agreement}), we hypothesize that since erroneous predictions on the test examples have large variance, correct predictions will have larger persistence (agreement) than erroneous ones, amplifying the difference between the two.

In order to test this conjecture, we use the agreement score (\ref{eq:Agreement}) defined in Section~\ref{sec:methodology}. More specifically, we compare the agreement score of the ensemble's final  prediction in epoch $E$ ($N^{E}(x)$) with the agreement score of the most agreed upon label, which is different from the final prediction: 
\begin{equation}
\begin{split}
    AgrMargin(\bx,y)&=Agr(\bx,y)-\max_{c \neq y}Agr(\bx,c) \cr
     y &= N^{E}(\bx)
\end{split}
  \label{eq:AgreementMargin}
\end{equation}
Fig.~\ref{fig:marginhist} shows histograms of score (\ref{eq:AgreementMargin}), separated to correct and erroneous predictions. Clearly, most of the predictions with negative margin scores are false, while the vast majority of the correct predictions have a positive score. The phenomenon is rather general, shown in a variety of conditions in which overfit is enhanced by injecting label noise.

\begin{figure*}[bhtp]
    \centering
    \begin{subfigure}[b]{0.245\textwidth}
        \includegraphics[width=\textwidth]{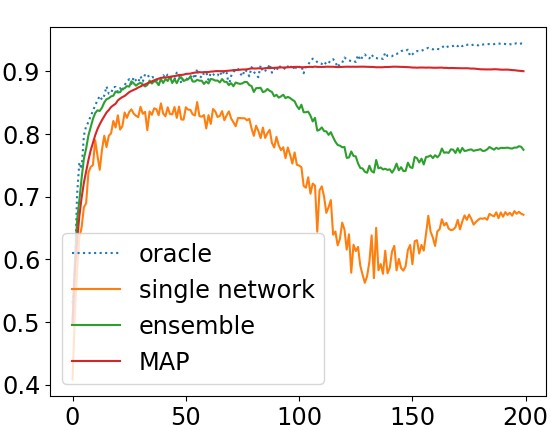}
    \vspace{-.5cm}
        \caption{Cifar10, 40\% symmetric noise}
        \label{subfig:testc10sym40}
    \end{subfigure}
    \hfill
    \begin{subfigure}[b]{0.245\textwidth}
        \includegraphics[width=\textwidth]{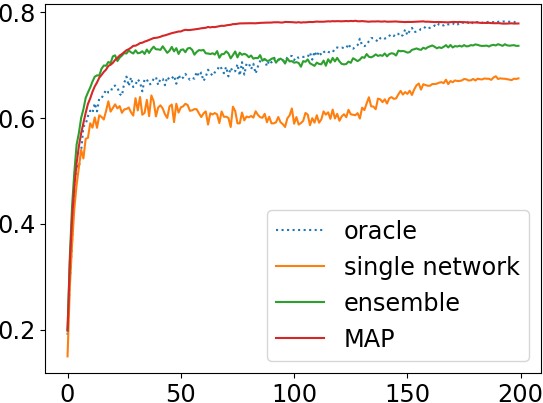}
    \vspace{-.5cm}
        \caption{Cifar100, 20\% asymmetric noise}
        \label{subfig:testc10asym20}
    \end{subfigure}
    \hfill
    \begin{subfigure}[b]{0.245\textwidth}
        \includegraphics[width=\textwidth]{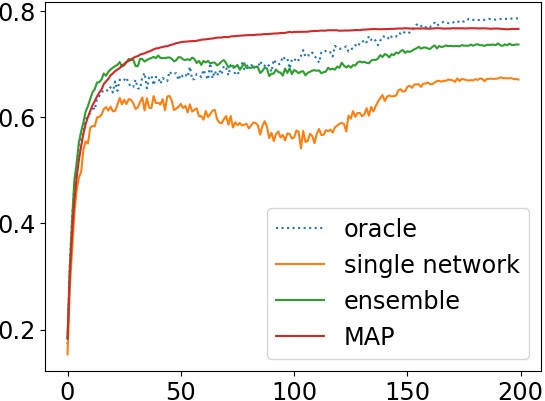}
    \vspace{-.5cm}
        \caption{Cifar100, 20\% symmetric noise}
        \label{subfig:testc100sym60}
    \end{subfigure}
    \hfill
    \begin{subfigure}[b]{0.245\textwidth}
        \includegraphics[width=\textwidth]{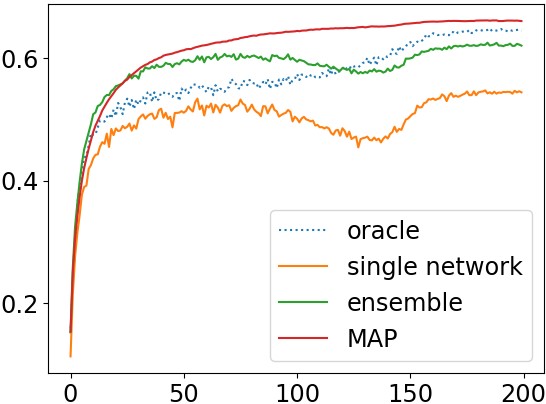}
    \vspace{-.5cm}
        \caption{TinyImg, 20\% symmetric noise}
    \end{subfigure}    
    \caption[test accuracy]{Mean test accuracy of our method (MAP, in red) in each training epoch (X-axis), compared with the following baselines: \begin{inparaenum}[(i)] \item A single network trained on the noisy dataset (orange). \item An ensemble trained on the noisy dataset (green). \item An oracle of a \emph{single network} trained on the clean subset of the data (dotted blue)\end{inparaenum}. Our method shows significant \emph{and stable} improvement, and \textbf{largely eliminates the overfit caused by the noisy labels}. Note that even though in some cases the test accuracy of the network/ensemble improves at the late stages of the training, it is often inferior compared to its past performance, and is \emph{always} inferior to our method's performance.} 
     \label{fig:testacc}
     \vspace{-.50em}
\end{figure*}

\section{Proposed Method (MAP)}
\label{sec:method}

We present a new ensemble classifier algorithm called Max Agreement Prediction (MAP) in Section~\ref{sec:alg}. Extensive experiments, demonstrating its superior performance, are described in Section~\ref{sec:results}.

\subsection{Algorithm}
\label{sec:alg}

Fig.~\ref{fig:marginhist} shows that empirically, the agreement margin score in (\ref{eq:AgreementMargin}) can be reliably used to identify false predictions. We take this result one step further and propose to use the ensemble statistics of agreement score in order to select the label prediction, instead of the usual practice of selecting
$N^{E}(\bx)$ from (\ref{eq:highestvote}). Specifically, we propose the following prediction selection rule:
\begin{equation}
    y(\bx)=\argmax_{c }Agr(\bx,c)
  \label{eq:selLabel}
\end{equation}

We use this score in Alg. \ref{alg:predcorrect} and test it in Section \ref{sec:results}, where its boost in performance is shown.
\begin{algorithm}[tb]
   \caption{max Agreement Prediction (MAP)}
\begin{algorithmic}
   \STATE {\bfseries Input:}{$f_i^e(x)$ - the prediction of network $i$ for test example $\bx\in\iT$, $~~\forall \bx,i,e\in\Ec$ 
    \STATE{\bfseries Output:}  final prediction $\forall\bx\in\iT$}
   \FOR{$\bx$ {\bfseries in} $\iT$}
   \STATE final\_prediction\_arr[x] $\gets\argmax\limits_c Agr(\bx,c)$\;
   \ENDFOR
    \STATE{Return final\_prediction\_arr};
\end{algorithmic}
\label{alg:predcorrect}
\end{algorithm}

\subsection{Empirical setup}
\label{sec:evaluation}

To evaluate our method with different levels of overfit we use image and text classification datasets with injected noisy labels (Cifar10/100, TinyImagenet, Imagenet100, MNLI, QNLI, QQP) and datasets with native label noise (Webvision50, Clothing1M, Animal10N). Training involves common methods known to reduce overfit, such as data augmentation, batch normalization and weight decay, in order to capture our method's \emph{added value} in combatting overfit. We defer full discussion of the implementation details to App.~\ref{app:implementation}.

As comparison baseline we use the following methods: \begin{inparaenum}[(i)] \item A single network of the same architecture and similarly trained. \item The same ensemble while using the `majority vote' prediction rule - $N^{E}(\bx)$ defined in (\ref{eq:highestvote}). \item The same ensemble while using the `class probabilities average' prediction rule. \end{inparaenum} The last 2 baselines are instances of a `regular ensemble', reflecting methods in common use that incur the same training cost as our method. 
A comparison with methods of comparable \emph{inference time} are discussed in Section~\ref{sec:ablation}.

Additionally, we compare our method to alternative post-processing methods, which also aim to improve classification at inference time. To assure a fair comparison, all the results are obtained using the same network architectures in the ensemble, which is most relevant to the methods that make use of an ensemble as we do \citep{salman2019overfitting, lee2019robust}. The method described in \citep{bae2022noisy} is executed several times with random seeds, the results of which are processed by the majority vote as commonly done. 

Lastly, we test our method's added value when integrated with two existing methods for training ensembles: (i) batch ensemble \citep{wen2020batchensemble}, which aims to reduce costs; and (ii) hyper ensemble \citep{wenzel2020hyperparameter}, which aims to increase the ensemble's diversity by using different hyper-parameters as well as different initialization.

\subsection{Results}
\label{sec:results}

\begin{table*}[thb!]
\footnotesize
  \centering

  \begin{tabular}{l| c|c|c|| c|c|c || c} 
    \multicolumn{1}{ c |}{Method/\textbf{Dataset}} & \multicolumn{3}{ c ||}{\textbf{Cifar10 sym}} & \multicolumn{3}{ c || }{\textbf{Cifar100 sym}} &  \multicolumn{1}{ c }{\textbf{Clothing1M}}   \\ 
    \hline
    \multicolumn{1}{ l |}{\% label noise}    & 20\% & 40\% & 60\%  &  20\% & 40\% & 60\% & 38\% (est)\\
    \hline
    \emph{single network}   & $85.4 \pm .1$& $67.7 \pm .6$& $43.5 \pm .5$& $67.0 \pm .2$ & $51.1 \pm .2$ & $32.2 \pm .4$ & $65.1 \pm .1$\\
    \emph{majority vote ensemble }   & $90.1 \pm .2$& $78.5 \pm .7$& $54.0 \pm .4$& $73.5 \pm .2$ & $61.2 \pm .4$ & $42.8 \pm .2$ & $66.0 \pm .2$\\
    \emph{probability average ensemble }   & 90.5 & 79.5& 56.6& 74.9 & 64.2 & 46.3 & 67.1\\
    \emph{MAP} & $\mathbf{93.2 \pm .1}$& $\mathbf{90.0 \pm .1}$& $\mathbf{83.8 \pm .7}$& $\mathbf{76.7 \pm .1}$ & $\mathbf{69.7 \pm .5}$& $\mathbf{60.0 \pm .4}$ &$\mathbf{71.7 \pm .1}$\\
    \hline

  \end{tabular}
  
  \vspace{0.2cm}
  
    \begin{tabular}{l| c|c|| c|c}
    \multicolumn{1}{ c |}{Method/\textbf{Dataset}} & \multicolumn{2}{ c || }{\textbf{TinyImagenet sym}} & \multicolumn{2}{ c }{\textbf{Imagenet100 sym}}\\ 
    \hline
    \multicolumn{1}{ l |}{\% label noise}    & 20\% & 40\% &  20\% & 40\%\\
    \hline
    \emph{single network}   & $54.5 \pm .7$ & $40.3 \pm .2$ & $76.4 \pm .5$&$64.9 \pm .1$\\
    \emph{majority vote ensemble }   &$62.0 \pm .1$&$50.3 \pm .4$&$82.8 \pm .2$&$75.5 \pm .5$\\
    \emph{probability average ensemble }   & 64.0& 53.2& 83.9& 77.9 \\
    \emph{MAP} & $\mathbf{66.0 \pm .1}$ &$\mathbf{60.1 \pm .1}$ &$\mathbf{83.9 \pm .1}$&$\mathbf{80.9 \pm .3}$\\
    \hline
  \end{tabular}
  
  \vspace{0.2cm}

\begin{tabular}{l| c|c|c|| c|c|c ||c}
    \multicolumn{1}{ c |}{Method/\textbf{Dataset}} & \multicolumn{3}{ c ||}{\textbf{Cifar10 asym}} & \multicolumn{3}{ c ||}{\textbf{Cifar100 asym}} &
    \multicolumn{1}{ c }{\textbf{Animal10N}}
    \\
        \hline
    \multicolumn{1}{ l |}{\% label noise}    & 10\% & 20\% & 40\%  &  10\% & 20\% & 40\% & 8\% (est) \\
    \hline
    \emph{single network}   & $90.8 \pm .1$& $83.1 \pm .2$& $59.7 \pm .3$& $74.1 \pm .2$ & $67.5 \pm .2$ & $47.5 \pm .1$& 86.0\\
    \emph{majority vote ensemble}   & $93.5 \pm .1$& $88.4 \pm .2$& $64.0 \pm .4$& $79.0 \pm .1$ & $73.7 \pm .1$ & $53.4 \pm .2$& $87.3$\\
    \emph{probability average ensemble }   & 93.4&88.6 &63.3 & 79.8 & 74.5 & 52.9 & 87.9\\
    \emph{MAP} & $\mathbf{95.2 \pm .1}$& $\mathbf{94.4 \pm .2}$& $\mathbf{85.6 \pm .2}$& $\mathbf{80.6 \pm .08}$ & $\mathbf{77.7 \pm .1}$& $\mathbf{57.5 \pm .2}$& $87.4$\\
    \hline

  \end{tabular}

\vspace{0.2cm}

\begin{tabular}{l| c|c|| c|c || c|c }
    \multicolumn{1}{ c |}{Method/\textbf{Dataset}} & \multicolumn{2}{ c ||}{\textbf{mnli}} & \multicolumn{2}{ c || }{\textbf{qnli}} & \multicolumn{2}{ c   }{\textbf{qqp}} \\
    \hline
    \multicolumn{1}{ l |}{\% label noise}    & 20\% & 40\% & 20\%  &  40\% & 20\% & 40\%\\
    \hline
    \emph{single network}   & $79.3 \pm .07$& $74.5 \pm .06$& $86.0 \pm .05$& $74.5 \pm .05$ & $85.7 \pm .04$ & $75.6 \pm .04$  \\
    \emph{majority vote ensemble }   & $81.6 \pm .2$& $76.7 \pm .2$& $87.3 \pm .1$& $74.6 \pm .4$ & $88.1 \pm .1$ & $76.3 \pm .1$ \\
    \emph{probability average ensemble }   & 82.1& 77.3& 87.5& 74.3 & 88.2 & 77.0 \\
    \emph{MAP} & $\mathbf{82.6 \pm .09}$& $\mathbf{79.3 \pm .1}$& $\mathbf{89.0 \pm .1}$& $\mathbf{82.5 \pm .08}$ & $\mathbf{88.8 \pm .05}$& $\mathbf{82.2 \pm .1}$ \\
    \hline
  \end{tabular}

  \caption{Mean test accuracy (in percent) and standard error, comparing our method (MAP) and some baselines. In the top 3 tables, we show results with 4 image datasets with injected label noise and 2 image datasets with presumed label noise marked by (est). In the bottom table, we show results with 3 text datasets with injected label noise.}

  \label{table:synthesized}
\end{table*}

\begin{table*}[thb!]
\footnotesize
  \centering
  \begin{tabular}{l| c|c || c|c || c|c || c}
    \multicolumn{1}{ c |}{Method/\textbf{Dataset}} & \multicolumn{2}{ c ||}{\textbf{Cifar10 sym}} & \multicolumn{2}{ c || }{\textbf{Cifar100 sym}} & \multicolumn{2}{ c || }{\textbf{TinyImagenet sym}} & \multicolumn{1}{ c }{\textbf{Clothing1M}}   \\ 
    \hline
    \multicolumn{1}{ l |}{\% label noise}    & 20\% & 40\% &  20\% & 40\% & 20\% & 40\% & 38\% (est)\\
    \hline
    \emph{RoG }   & $87.4$& $81.8$& $64.3$ & $55.6$ &-&- & 68.0\\
    \emph{consensus}   & $87.4 \pm .3$& $74.7 \pm .3$&  $71.0 \pm .4$ &  $41.9 \pm .1$&$23.7 \pm .2$&$18.5 \pm .3$ & 66.9\\
    \emph{NPC }   & 89.8& $77.5$&  $73.7$ & $61.5$ & 62.1&50.3 & $70.8$\\
    \emph{MAP} & $\mathbf{93.2 \pm .07}$& $\mathbf{90.0 \pm .1}$& $\mathbf{76.7 \pm .1}$ & $\mathbf{69.7 \pm .5}$& $\mathbf{66.0 \pm .1}$ &$\mathbf{60.1 \pm .06}$ &$\mathbf{71.7 \pm .2}$\\
    \hline
  \end{tabular}

  \caption{Mean test accuracy (in percent) and standard error, comparing our method (MAP) to some of the post-processing methods discussed above. We report 3 image datasets with injected label noise and a single dataset with presumed annotation errors marked by (est). Source of alternative methods results: RoG from \citep{lee2019robust}, consensus \citep{salman2019overfitting} and NPC \citep{bae2022noisy} results are recomputed using our settings and our own implementation.}

  \label{table:comparison}
\end{table*}

Fig.~\ref{fig:testacc} and Table~\ref{table:synthesized} summarize the test accuracy results. As performance drop becomes more severe (e.g., due to increased label noise), our method significantly outperforms the regular ensemble (both majority vote and class probabilities average) at the end of the training. It often outperforms optimal early stopping (i.e. the ensemble's best performance before the overfit, see Fig.~\ref{fig:testacc}).  Lastly,  Table~\ref{table:otherensembles} shows that our method is complementary to ensemble training methods that aim to increase diversity or decrease their cost. 

Importantly, note that MAP eliminates the overfit in almost all cases: when inspecting the case studies shown in Fig.~\ref{fig:testacc}, we clearly see that the test accuracy does not deteriorate when MAP is used, unlike a single network and the regular ensemble, making it a superior alternative to early stopping. Only in severe, unrealistic settings of label noise (such as 40\% asymmetric noise) do we see a deterioration in our method's performance in the late stages of the training (though MAP still outperforms the regular ensemble), but such cases are not common.

\begin{table}[htb]
\small
    \centering
    \begin{tabular}{l| c|c||c|c}
    \multicolumn{1}{ r |}{Method} & \multicolumn{2}{ c ||}{\textbf{BatchEnsemble}} & \multicolumn{2}{ c   }{\textbf{Hyper-ensemble}} 
    \\ 
    \hline
    \multicolumn{1}{ l |}{\% label noise (sym)}    & 20\% & 40\% & 20\% &  40\% \\
    \hline
    \emph{original accuracy}   &  69.5& 52.8& 72.1& 61.3 \\
    adding \emph{MAP}   &  \textbf{72.5}& \textbf{60.1}& \textbf{74.1}& \textbf{67.6}
    \vspace{.1cm}
    \end{tabular}
    \vspace{1cm}
        \begin{tabular}{l| c|c||c|c} 
    \multicolumn{1}{ r |}{Method} & \multicolumn{2}{ c || }{\textbf{BatchEnsemble}} & \multicolumn{2}{ c }{\textbf{Hyper-ensemble}}
    \\ 
    \hline
    \multicolumn{1}{ l |}{\% label noise (asym)}    &  10\% & 20\%& 10\%  &  20\% \\
    \hline
    \emph{original accuracy}   &  75.7& 67.8& 76.9& 71.2 \\
    adding \emph{MAP}   &  \textbf{77.8}& \textbf{71.1}& \textbf{77.5}& \textbf{75.3}
    \vspace{-1.0cm}
    \end{tabular}

    \caption{The added value of our method when integrated with 2 existing ensemble methods designed to either decrease cost or increase diversity, using Cifar100 with injected label noise. 
    }
    \label{table:otherensembles}
        \vspace{-.5cm}
\end{table}

\subsection{Limitations}

Our method has two main limitations: (i) the need for multiple network training, and (ii) the need to save multiple checkpoints of the network during training to be used at inference time. Using multiple GPUs can mitigate these limitations, both for parallel training and inference. Not less importantly, in Section~\ref{sec:ablation} and Table~\ref{table:otherensembles} we show that training a few networks and saving only a few checkpoints for each one (equally spaced through time), suffice for optimal or almost optimal results.

\section{Ablation Study}
\label{sec:ablation}

Our method requires the training of a few networks and the retaining of multiple checkpoints from each one, which is computationally costly. This is justified by the large improvement achieved using our method, as shown above. In order to evaluate the practical implication of those added complexities, we investigate in Section~\ref{sec:ensemble-size} the effect of the number of networks in the ensemble on the final outcome of a regular ensemble and of our method, and in Section~\ref{sec:ensemble-sample} the effect of using only a small subset of the training checkpoints. In Section~\ref{sec:nooverfit} we evaluate our method on datasets in which the size of the training set is small, which is another known cause for overfit, and without label noise. In section~\ref{sec:elr} we check our method's additive value when unique methods for learning with label noise are used.

Summarizing the full ablation results, even with \emph{a few networks} and \emph{a few checkpoints for each}, our method achieves optimal or near-optimal performance, making it practical and useful even with limited computational resources. Interestingly, our method can be combined with methods for learning with noisy labels, improving performance when such methods fail to eliminate overfit. Our method is also useful when the training set is very small. Finally, the emerging picture is robust to changes in the networks' architecture, and our method maintains its benefit over alternative deep ensemble. Additional results, concerning the effect of architecture and training choices, are described in App.~\ref{sec:ablation-other}.

\begin{figure}[htbp]
    \centering
    \begin{subfigure}[h]{0.49\linewidth}
        \includegraphics[width=.85\linewidth]{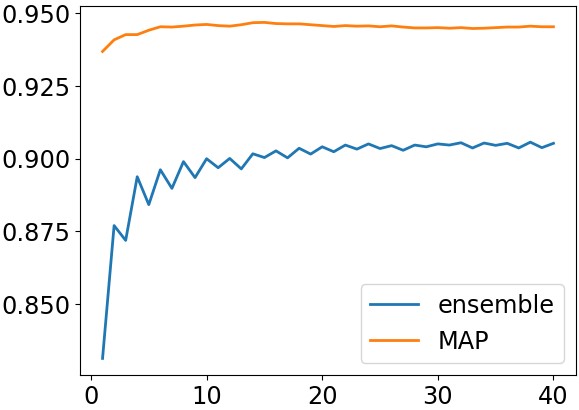}
        \caption{Cifar10, 20\% asymmetric noise}
        \label{subfig:ensemblesizecifar10}
    \end{subfigure}
    \hfill
    \begin{subfigure}[h]{0.49\linewidth}
        \includegraphics[width=.85\linewidth]{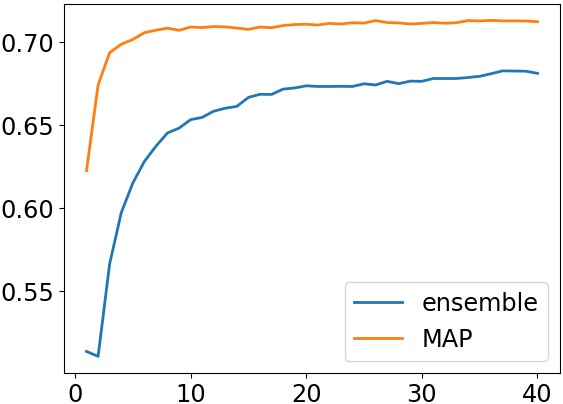}
        \caption{Cifar100, 40\% symmetric noise}
        \label{subfig:ensemblesizecifar100}
    \end{subfigure}
     \caption[ensemble size]{test accuracy of a regular ensemble and MAP when varying the size of the ensemble, where the x-axis denotes the number of networks in the ensemble.} 
     \label{fig:ensemblesize}
         \vspace{-1.em}
\end{figure}

\subsection{The effect of ensemble size}
\label{sec:ensemble-size}
In the results shown in Table~\ref{table:synthesized}, all ensemble classifiers use the same number of trained networks (5).  
However, when considering the number of networks evaluated at inference time and in order to achieve a fair comparison between regular ensembles and our method, we repeat the study without limiting the ensemble's size. Accordingly, we report accuracy at the point where the addition of models does not improve performance, see results in Table~\ref{table:hugeensembles}. Notably, our method maintains its superiority. Moreover, we see that with MAP - even a few networks achieve optimal or close to optimal results, making it practical for use (see Fig.~\ref{fig:ensemblesize}). 

\begin{table}[htb]
\small
    \centering
    \begin{tabular}{l| c|c||c|c}
    \multicolumn{1}{ c |}{Method/\textbf{Dataset}} & \multicolumn{2}{ c ||}{\textbf{Cifar10 sym}} & \multicolumn{2}{ c   }{\textbf{Cifar100 sym}} 
    \\ 
    \hline
    \multicolumn{1}{ l |}{\% label noise}    & 20\% & 40\% & 20\% &  40\% \\
    \hline
    \emph{MAP (5) }   & 93.3& 90.0& 76.6& 70.1 \\
    \emph{majority vote ($opt$)}   & 91.8& 84.2& 77.3& 69.1 \\
    \emph{MAP ($opt$)}   & 93.5& 90.7& 77.4& 71.3 
    \vspace{.1cm}
    \end{tabular}
    \vspace{1cm}
        \begin{tabular}{l| c|c||c|c}
    \multicolumn{1}{ c |}{Method/\textbf{Dataset}} & \multicolumn{2}{ c || }{\textbf{Cifar10 asym}} & \multicolumn{2}{ c }{\textbf{Cifar100 asym}}
    \\ 
    \hline
    \multicolumn{1}{ l |}{\% label noise}    &  20\% & 40\%& 20\%  &  40\% \\
    \hline
    \emph{MAP (5) }   &  94.4& 85.7& 77.9& 57.3 \\
    \emph{majority vote ($opt$)}    & 90.5& 66.5& 76.6& 56.1\\
    \emph{MAP ($opt$)}   &  94.5& 86.9& 78.4& 59.1 
    \vspace{-1.0cm}
    \end{tabular}

    \caption{Comparing ensembles with optimal size, denoted ($opt$), beyond which adding more networks does not improve the ensemble performance, to our method with 5 and $opt$ networks. }
    \label{table:hugeensembles}
        \vspace{-0.4cm}
\end{table}

\subsection{How many checkpoints are needed?}

\label{sec:ensemble-sample}

The agreement score in (\ref{eq:Agreement}) can be estimated from any subset of epochs. 
Fig.~\ref{fig:epochsize} shows test accuracy when using a subset of the training epochs, equally spaced through time, showing that even a few checkpoints are sufficient to achieve the same accuracy (or close to it) as when using all the epochs. 

\begin{figure}[htbp]
    \centering
    \begin{subfigure}[b]{0.24\linewidth}
        \includegraphics[width=\linewidth]{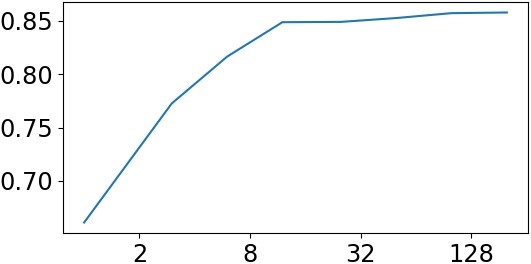}
    \vspace{-.5cm}
        \caption{\tiny Cifar10 40\% AsN}
        \label{subfig:epochsizenewc10asym40}
    \end{subfigure}
    \hfill
    \begin{subfigure}[b]{0.24\linewidth}
        \includegraphics[width=\linewidth]{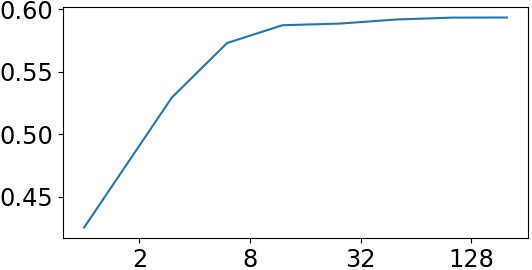}
    \vspace{-.5cm}
        \caption{\tiny Cifar100 60\% SN}
        \label{subfig:epochsizenewc100sym60}
    \end{subfigure}
    \hfill
    \begin{subfigure}[b]{0.24\linewidth}
        \includegraphics[width=\linewidth]{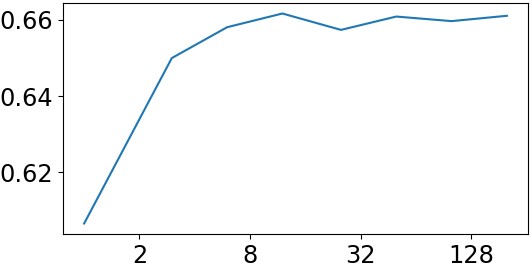}
    \vspace{-.5cm}
        \caption{\tiny TinyImg 20\% SN}
        \label{subfig:epochsizenewTimgsym20}
    \end{subfigure}
    \hfill
    \begin{subfigure}[b]{0.24\linewidth}
        \includegraphics[width=\linewidth]{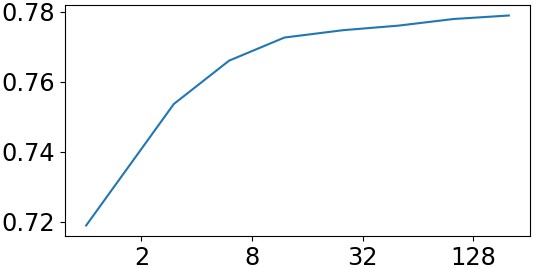}
    \vspace{-.5cm}
        \caption{\tiny Cifar100 20\% AsN}
        \label{subfig:epochsizenewc100asym20}
    \end{subfigure}
     \caption[epochs size]{Test accuracy of MAP when using only a fraction of the epochs (equally spaced) per number of epochs. 'SN' denotes symmetric noise, and 'AsN' asymmetric noise.} 
     \label{fig:epochsize}
     \vspace{-1.0em}
\end{figure}

\vspace{-.5em}

\subsection{Performance evaluation on clean datasets}
\label{sec:nooverfit}

\begin{figure}[htbp]
    \centering
    \begin{subfigure}[b]{0.32\linewidth}
        \includegraphics[width=\linewidth]{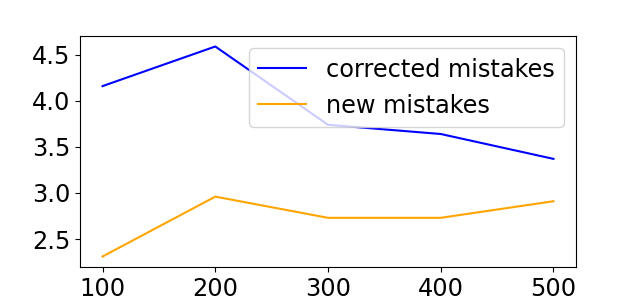}
    \vspace{-.5cm}
        \caption{TinyImagenet}
        \label{subfig:dataamountTimg}
     \vspace{-.25cm}
    \end{subfigure}
    \begin{subfigure}[b]{0.32\linewidth}
        \includegraphics[width=\linewidth]{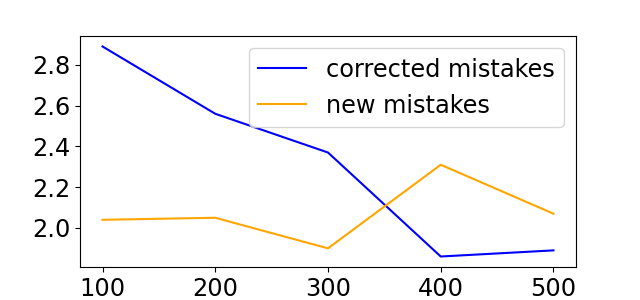}
    \vspace{-.5cm}
        \caption{Cifar100}
        \label{subfig:dataamountc100}
     \vspace{-.25cm}
   \end{subfigure}
    \begin{subfigure}[b]{0.32\linewidth}
        \includegraphics[width=\linewidth]{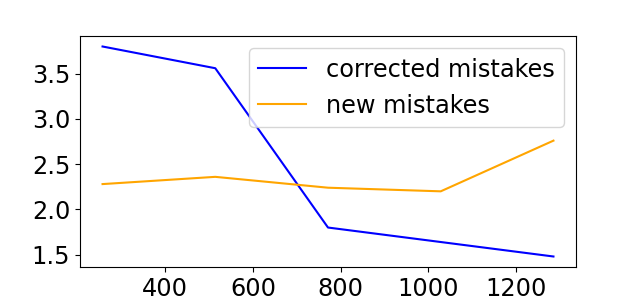}
    \vspace{-.5cm}
        \caption{Imagenet 50 classes}
        \label{subfig:dataamountImg50}
    \vspace{-.25cm}
    \end{subfigure}    
   
     \caption[data size]{X-axis: number of datapoints per class. Y-axis: fraction of predictions of a regular ensemble, which MAP changes from true to false (orange) and from false to true (blue). MAP outperforms the ensemble whenever the blue curve lies above the orange curve.}
     \label{fig:datasize}
     \vspace{-1.2em}
\end{figure}

We evaluate our method on clean datasets of different sizes, where overfit can be much smaller or nonexistent. Results are shown in Fig.~\ref{fig:datasize}. Clearly, MAP can be beneficial in such circumstances as well. %

\subsection{Combining MAP with unique method for learning with label noise}
\label{sec:elr}
In recent years, many methods have been developed to improve the model's robustness to the existence of noisy labels in the training set. As label noise is a major cause of overfit, we wanted to check whether our method can provide \emph{additional} performance gain on top of such methods, especially when they do not fully manage to eliminate the overfit. In Table~\ref{table:elr} we test our method on models trained with the ELR method \citep{liu2020early}, and show that our method can still provide significant added value to this method.

\begin{table}[h]
\small
   \centering
    \begin{tabular}{l| c|c|c}
    \multicolumn{1}{ c |}{Method/\textbf{Dataset}} & \multicolumn{3}{ c }{\textbf{TinyImagenet}} \\ 
    \hline
    \multicolumn{1}{ l |}{\% label noise (sym)}    & 20\% & 40\% & 60\% \\
    \hline
    \emph{ELR} & 57.5 & 47.8& 25.9 \\
    \emph{ELR ensemble} & 61.6 & 54.0& 33.3\\
    \emph{ELR + MAP} & $\mathbf{62.3}$ & $\mathbf{57.4}$& $\mathbf{47.2}$ \\
    \end{tabular}
    \caption{A competitive method for learning with label noise - ELR \citep{liu2020early}, and its performance with and without MAP.}
    \label{table:elr}
     \vspace{-.5em}
\end{table}

\section{Conclusions}

Overfit is a notorious problem, which afflicts deep learning, especially in the context of real-life data. 
In this paper, we propose a new ensemble classifier based on a collection of deep networks, termed MAP, which uses the evolution of predictions in each model to generate predictions that in most cases are resistant to overfit. This result is consistently shown using various datasets, including both textual and image data, in a variety of settings that exhibit heavy overfit. The method is based on a new empirical result, which shows that the agreement among deep networks decreases with the occurrence of overfit. Further support is provided by the theoretical analysis of a linear regression model.

Our empirical study focused for the most part, though not entirely, on datasets with noisy labels, a real-world condition that creates heavy overfitting, where our method is most useful. In almost all cases, the method's resistance to overfit is maintained even when the level of noise is very high. In such cases, our method significantly outperforms regular ensembles, as well as other post-processing methods designed specifically to handle noisy labels.  Its advantage over early stopping, shown in Fig.~\ref{fig:testacc}, is particularly interesting, as it shows that our method could, in some cases, allow the user to harness the new "knowledge" learned by the model at the late stages of training - even after overfit has occurred - while minimizing the damages of overfit.

Our method has some practical advantages. Notably, it is easy to implement and can be readily used at post-processing with almost any method, network, and dataset. This is accomplished \emph{without any change to the training process and without any additional hyper-parameter tuning}. Since our method is only employed at post-processing, when no overfit is suspected it can be completely avoided. 
Thus, it serves as a practical tool to combat overfit (with or without label noise in the training set), especially when large amounts of correctly labeled data are difficult to acquire.

\newpage
\bibliography{AAAIfinal}

\bigskip

\appendix

\section*{Appendix}

\section{Ensembles improve robustness to errors}
\label{subsec:motivation}

\begin{figure}[htbp]
    \centering
    \begin{subfigure}[b]{0.43\linewidth}
        \includegraphics[width=\linewidth]{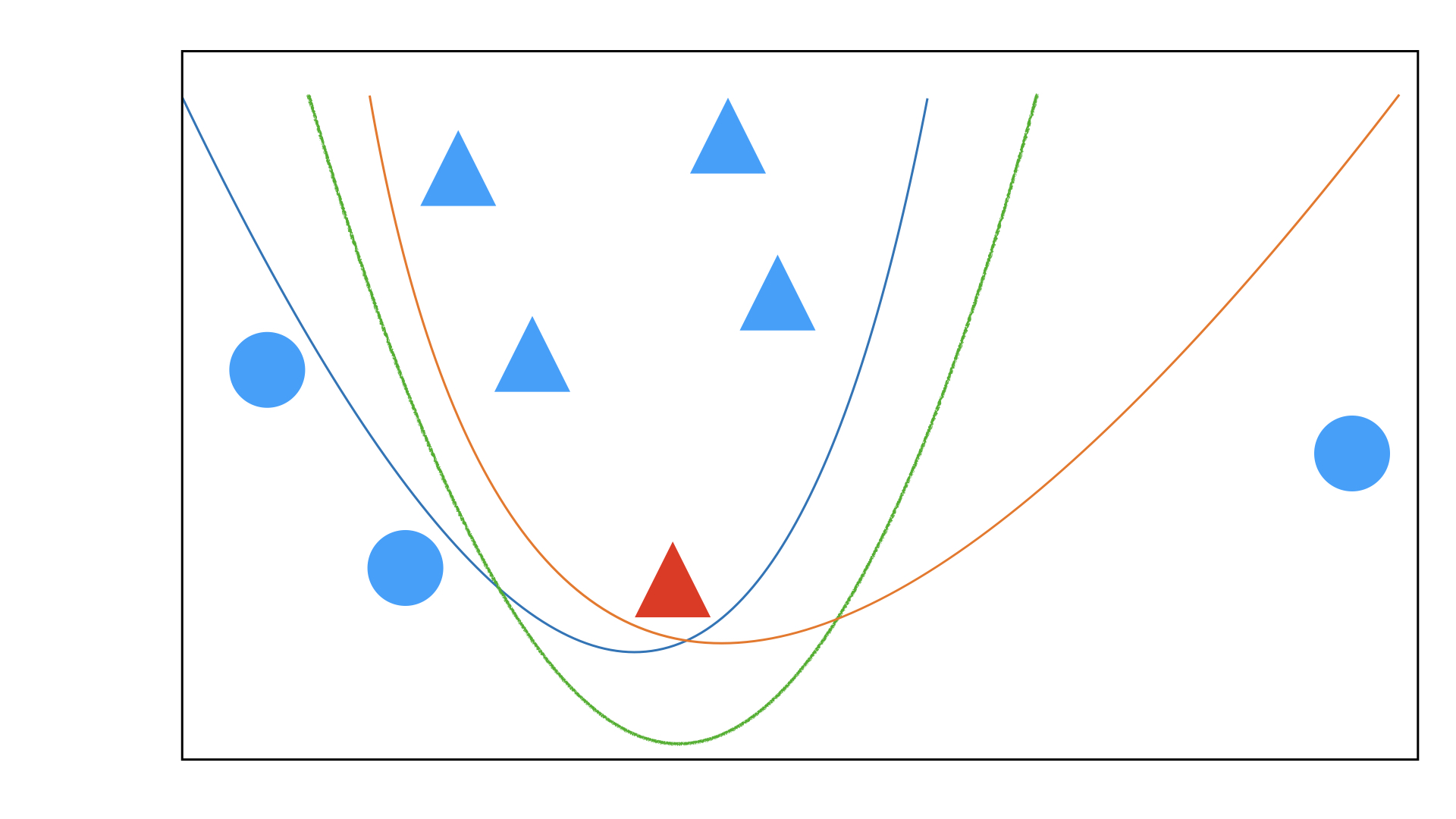}
    \vspace{-.76cm}
        \caption{3 decision boundaries}
        \label{subfig:overfittingnoisydataset}
    \vspace{-.25cm}
    \end{subfigure}
    \begin{subfigure}[b]{0.43\linewidth}
        \includegraphics[width=\linewidth]{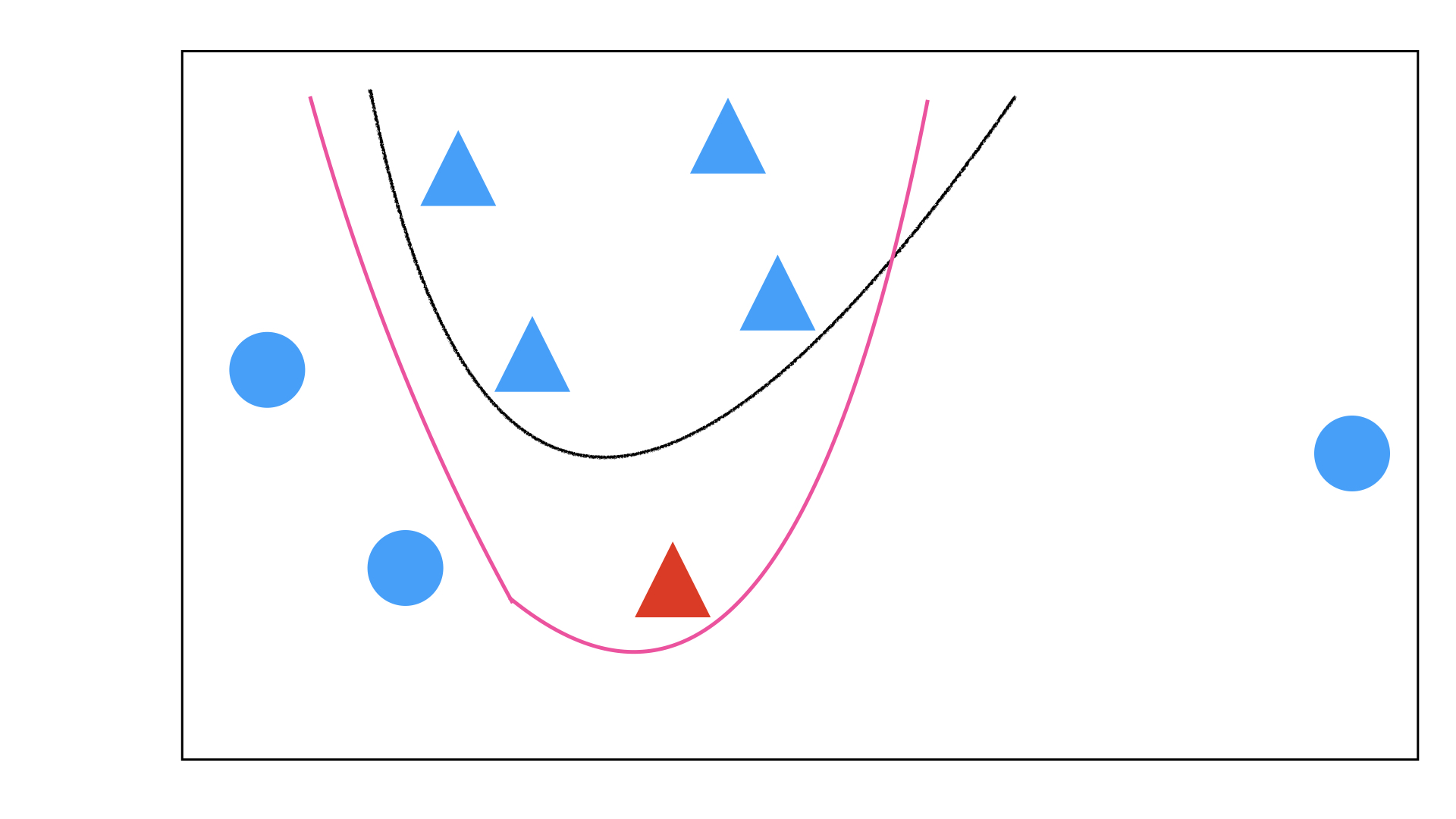}
    \vspace{-.76cm}
        \caption{Ensemble classifier}
        \label{subfig:ensembledecisionboundaries}
    \vspace{-.25cm}
    \end{subfigure}
     \caption[motivation]{(a) A $2D$ classification example, with a single false label (in red) and an ensemble of 3 models, each trained to overfit the false label. (b) The black line marks the correct decision boundary of the data's empirical distribution. The decision boundary of an ensemble classifier, based on the majority vote of the 3 instances in (a), is shown in pink. It is clearly closer to the correct decision boundary than any of the 3 members of the ensemble. } 
     \label{fig:motivationillustration}
     \vspace{-.5em}
\end{figure}

Ensemble-based methods deliver some statistics of the distribution of predictions, seeking a better estimator of the prediction of the correct classifier, and are expected to be effective in reducing error when there is variance in the classifiers' predictions. 
In Fig.~\ref{fig:motivationillustration} we demonstrate this intuition with a simple toy problem, a $2D$ example with two classes and label noise. In Fig.~\ref{subfig:overfittingnoisydataset} we show three possible decision boundaries (models), obtained since the fitting procedure is assumed to be stochastic. Because the training set has an erroneous label, the models deviate largely from the true separator. In Fig.~\ref{subfig:ensembledecisionboundaries} we plot the decision boundary of the ensemble constructed using these models (in pink), which is clearly superior, being closer to the true separator (in black).

\section{Overfit and inter-model correlation}
\label{sec:overfit-app}

In this section we formally analyze the relation between two type of scores, which  measure either overfit or inter-model agreement. \emph{Overfit} is a condition that can occur during the training of deep neural networks. It is characterized by the co-occurring decrease of train error or loss, which is continuously minimized during the training of a deep model, and the increase of test error or loss, which is the ideal measure one would have liked to minimize and which determines the network's generalization error. An \emph{agreement} score measures how similar the models are in their predictions. 

We start by introducing the model and some notations in Section~\ref{sec:notations}.  In Section~\ref{sec:overfit-agreement} we prove the main result: the occurrence of overfit at time s in all the models of the ensemble implies that the agreement
between the models decreases.

\subsection{Model and notations}
\label{sec:notations}

\myparagraph{Model.}
We analyze the agreement between an ensemble of $Q$ models, computed by solving the linear regression problem with Gradient Descent (GD) and random initialization. In this problem, the learner estimates a linear function $f(\bx): \R^d\to \R$, where $\bx\in\R^d$ denotes an input vector and $y\in\R$ the desired output. Given a training set of $M$ pairs $\{\bx_m,y_m\}_{m=1}^M$, let $X\in\R^{d\times M}$ denote the training input - a matrix whose $m^\mathrm{th}$ column is $\bx_m\in\R^d$, and let row vector $\by\in\R^M$ denote the output vector whose $m^\mathrm{th}$ element is $y_m$. When solving a linear regression problem, we seek a row vector $\hat\hW\in\R^D$ that satisfies
\begin{equation}
\label{eq:lin-problem}
\hat\hW = \argmin\limits_\hW L(\hW), \quad L(\hW) =  \frac{1}{2} \Vert \hW X-\by  \Vert_F^2
\end{equation}
To solve (\ref{eq:lin-problem}) with GD, we perform at each iterative step $s\geq 1$ the following computation:
\begin{equation}
\label{eq:gradient-step}
\begin{split}
\hW^{s+1} =& \hW^s -\mu \dW^s \\
\dW^s =& \frac{\partial L(\iX)}{\partial\hW}\bigg\vert_{\hW=\hW^s} = \hW^s\SXX-\SYX \\
&\SXX=X X^\top, ~\SYX=\by  X^\top
\end{split}
\end{equation}
for some random initialization vector $\hW_0\in\R^M$ where usually $\E[\hW_0]=0$, and learning rate $\mu$. Henceforth we omit the index $s$ when self evident from context.

As a final remark, when we use the notation $\|A\|$ below, it denotes the operator norm of the symmetric matrix $A$, namely, its largest singular value.

\myparagraph{Additional notations}
\begin{itemize}
    \item
Index $i\in[Q]$ denotes a network instance, and $t$ denotes the test data. For simplicity and with some risk of notation abuse, let $Q$ and $Q'$ also denote sets of indices, either training or test. Specifically, $Q=[1,\ldots,Q]$ and $Q'=[1,\ldots,Q,t]$. 
\item
We use \underline{function notation}, where $\{X(i),y(i)\}$ is the training set of network $i$ and $\{X(t),y(t)\}$ is the test set. Thus
\begin{equation*}
\SXX(j)=\X(j) \X(j)^\top, \SYX (j)=\by (j) \X(j)^\top \quad j\in Q'  
\end{equation*}
   \item
Similarly, $\hW (i)\in\R^d$ is the model learned by network $i$, and $\dW (i)$ is the gradient step of $\hW(i)$, where 
\begin{equation*}
\dW (i) = \hW (i) \SXX (i) -\SYX (i)  \qquad i\in Q
\end{equation*}
    \item $\bE(i,j)$ denotes a function, which maps indices $i\in Q, j\in Q'$ to the cross error of model $i$ on data $j$ - the classification error vector when using model $\hW(i)$ to estimate  $y(j)$. Let $M'=M$ if $j\in Q$ is a training index, and $M'=N$ if $j\in\{t\}$ .Then we can write
\begin{equation*}
\begin{split}
&\bE(i,j):Q\times Q'\to \R^{M'} \quad \bE(i,j) = \hW(i)\X(j)-\by (j) \\
&\implies\quad  \dW(i) = \bE(i,i) \X(i)^\top
\end{split}
\end{equation*}
Note that in this notation, $e(i,t)$ is the classification error vector when using model $i$, which is trained on data $\X(i)$, to estimate the desired outcome on the test data - $y(t)$. $\Vert e(i,t)\Vert_F$ is the test error, estimate of the generalization error, of classifier $i$.
    \item Let $\DW(i,j)$ denote the cross gradient:
\begin{equation}
\label{eq:cross}
\begin{split}
\DW(i,j) = \bE(i,j)\X(j)^\top = \hW(i)\SXX(j)-\SYX(j) \quad \\
\implies\quad  \dW(i) = \DW(i,i)
\end{split}
\end{equation}
 \end{itemize}
After each GD step, the model and the error are updated as follows:
\begin{equation*}
\begin{split}
\tilde\hW(i) &= \hW (i) - \mu\dW (i) \\ 
\tilde\bE(i,j) &= \tilde\hW(i)\X(j)-\by (j) = \bE(i,j)- \mu\DW (i,i)\X(j)
\end{split}
\end{equation*}
We note that at step $s$ and $\forall i,j\in Q$, $\tilde\hW(i)$ is a random vector in  $\R^d$, and $\tilde\bE(i,j)$ is a random vector in $\R^{M}$. If $j\in\{t\}$, then $\tilde\bE(i,j)=\tilde\bE(i,t)$ is a random vector in $\R^{N}$.

\myparagraph{Test error random variable.}
Let $N$ denote the number of test examples. Note that $\{\bE(i,t)\}_{i=1}^Q$ is a set of $Q$ test errors vectors in $\R^N$, where the $n^\mathrm{th}$ component of the $i^\mathrm{th}$ vector $\bE(i,t)_n$ captures the test error of model $i$ on test example $n$. In effect, it is a sample of size $Q$ from the random variable $\bE(*,t)_n$. This random variable captures the error over test point $n$ of a model computed from a random sample of $M$ training examples. The empirical variance of this random variable will be used to estimate the agreement between the models.

\myparagraph{Overfit.}
Overfit occurs at step $s$ if 
\begin{equation}
\label{eq:overfit-def}
\Vert \tilde\bE(i,t) \Vert_F^2 >  \Vert \bE(i,t)  \Vert_F^2
\end{equation}

\myparagraph{Measuring inter-model agreement.}
For our analysis, we need a score to measure agreement between the predictions of $Q$ linear functions. This measure is chosen to be the variance of the test error among models. Accordingly, we will measure \emph{disagreement} by the empirical variance of the test error random variable $\tilde\bE(*,t)_n$, average over all test examples $n\in[N]$.

More specifically, consider an ensemble of linear models $\{w(i)\}_{i=1}^Q$ trained on set $\iX$ to minimize (\ref{eq:lin-problem}) with $s$ gradient steps, where $i$ denotes the index of a network instance and $Q$ the number of network instances. Using the test error vectors of these models $e(i,t)$, we compute the empirical variance of each element $\var[\bE(*,t)_n]$, and sum over the test examples $n\in[N]$:
\begin{equation*}
\begin{split}
\sum_{n=1}^N\sigma^2[e(*,t)_n ] &=\sum_{n =1}^N\frac{1}{2Q^2}\sum_{i=1}^Q\sum_{j=1}^Q \vert e(i,t)_n -e(j,t)_n \vert^2 \\
&=\frac{1}{2Q^2}\sum_{i=1}^Q\sum_{j=1}^Q \Vert e(i,t)-e(j,t)\Vert^2 
\end{split}
\end{equation*}

\begin{defn}[Inter-model DisAgreement.]
The disagreement among a set of $Q$ linear models $\{w(i)\}_{i=1}^Q$ at step $s$ is defined as follows
\begin{equation}
\label{eq:agreement-s}
DisAg(s) = \frac{1}{2Q^2}\sum_{i=1}^Q\sum_{j=1}^Q \Vert e(i,t)-e(j,t)\Vert^2 
\end{equation}
\end{defn}

\subsection{Overfit and Inter-Network Agreement}
\label{sec:overfit-agreement}

We first prove Lemma~\ref{lem:overfit}, which has the following intuitive interpretation: overfit occurs in model $i$ iff the gradient step of model $i$ (denoted $\dW(i)$), which is computed using the training set, is negatively correlated with the 'correct' gradient step - the one we would have obtained had we known the test set (this unattainable vector is denoted $\DW(i,t)$).

\begin{lemma}
\label{lem:overfit}
Assume that the learning rate $\mu$ is small enough so that we can neglect terms that are $O(\mu^2)$. Then in each gradient descent step $s$, overfit occurs iff the gradient step $\dW(i) $ of network $i$ is negatively correlated with the cross gradient $\DW(i,t)$.
\end{lemma}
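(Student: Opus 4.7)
The plan is to work directly from the overfit definition in (\ref{eq:overfit-def}) and expand the squared norm of the updated test error vector in powers of the learning rate $\mu$. From the definition of the update, $\tilde\bE(i,t) = \bE(i,t) - \mu\,\dW(i)\X(t)$, so I would begin by writing
\begin{equation*}
\Vert \tilde\bE(i,t)\Vert_F^2 = \Vert \bE(i,t)\Vert_F^2 - 2\mu\,\langle \bE(i,t),\, \dW(i)\X(t)\rangle + \mu^2 \Vert \dW(i)\X(t)\Vert_F^2.
\end{equation*}
Since the assumption allows us to neglect the $O(\mu^2)$ tail, overfit at step $s$ (i.e., $\Vert\tilde\bE(i,t)\Vert_F^2 > \Vert\bE(i,t)\Vert_F^2$) is equivalent to the sign condition $\langle \bE(i,t),\, \dW(i)\X(t)\rangle < 0$.

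Next, I would reshape this inner product into the form required by the statement. Treating $\bE(i,t)$ as a row vector in $\R^N$ and $\dW(i)\X(t)$ as another row vector in $\R^N$, the standard inner product is $\bE(i,t)\,\bigl(\dW(i)\X(t)\bigr)^{\!\top} = \bigl(\bE(i,t)\X(t)^{\!\top}\bigr)\,\dW(i)^{\!\top}$. Invoking the definition of the cross gradient from (\ref{eq:cross}), $\DW(i,t) = \bE(i,t)\X(t)^{\!\top}$, this simplifies to $\langle \DW(i,t),\, \dW(i)\rangle$. Combining with the previous paragraph, overfit holds iff $\langle \dW(i),\, \DW(i,t)\rangle < 0$, which is precisely the statement that the gradient $\dW(i)$ and the cross gradient $\DW(i,t)$ are negatively correlated.

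There is essentially no analytic obstacle: the argument is just an $O(\mu)$ expansion plus the algebraic identity that lets us rearrange the inner product into the cross-gradient form. The only subtle point worth flagging explicitly in the write-up is the indexing/transposition bookkeeping between the row-vector conventions used for $\hW(i)$, $\dW(i)$ and $\bE(i,j)$, together with the design matrix $\X(t)\in\R^{d\times N}$; this is what makes the seemingly asymmetric quantity $\bE(i,t)\cdot(\dW(i)\X(t))$ coincide with the symmetric-looking inner product $\langle\dW(i),\DW(i,t)\rangle$ in $\R^d$. Once this is in place, the "iff" is immediate from the strict inequality defining overfit.
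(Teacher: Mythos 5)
Your proposal is correct and follows essentially the same route as the paper's own proof: expand $\Vert \bE(i,t)-\mu\,\dW(i)\X(t)\Vert_F^2$, drop the $O(\mu^2)$ term, and use the identity $\bE(i,t)\X(t)^\top=\DW(i,t)$ to rewrite the first-order term as $\dW(i)\cdot\DW(i,t)$. The transposition bookkeeping you flag is exactly the step the paper performs in passing from $-2\mu\,\DW(i,i)\X(t)\bE(i,t)^\top>0$ to $\DW(i,i)\cdot\DW(i,t)<0$, so nothing is missing.
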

\begin{proof}
Starting from (\ref{eq:overfit-def})
\begin{equation*}
\begin{split}
\hspace{-5mm}\texp{overfit} \iff & \Vert \tilde\bE(i,t) \Vert_F^2 >  \Vert \bE(i,t)  \Vert_F^2  \\
\iff & \Vert \tilde\bE(i,t) \Vert_F^2 -  \Vert \bE(i,t)  \Vert_F^2 > 0 \\ \iff & \Vert \bE(i,t) - \mu\DW (i,i)X(t)\Vert_F^2 -  \Vert \bE(i,t)  \Vert_F^2 > 0\\
\iff & -2\mu\DW(i,i)X(t)\bE(i,t)^\top + O(\mu^2) > 0 \\
\iff &  \DW(i,i)\cdot\DW(i,t)  < 0  \\
\iff &  \dW(i)\cdot\DW(i,t)  < 0  
\end{split}
\end{equation*}
\end{proof}

Lemma~\ref{lem:norms} claims that if the magnitude of the gradient step $\mu$ is small enough, then the operator norm of matrix $I-\mu\SXX$ is smaller than 1. The implication is that a geometric sum of this matrix converges, a technical result which will be used later. 

\begin{lemma}
\label{lem:norms}
For any invertible covariance matrix $\SXX$ there exists $\hat\mu>0$, such that $ \mu <\hat\mu\implies\Vert I-\mu\SXX\Vert < 1$.
\end{lemma}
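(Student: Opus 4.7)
The plan is to exploit the fact that $\SXX = XX^\top$ is symmetric, so both $\SXX$ and $I-\mu\SXX$ are diagonalizable in an orthonormal basis, and for symmetric matrices the operator norm coincides with the spectral radius. This reduces the problem from a statement about matrix norms to a simple statement about eigenvalues.

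First, I would observe that $\SXX$ is symmetric positive semidefinite by construction, and since it is assumed invertible it is in fact positive definite. Let $0 < \lambda_{\min} \leq \lambda_{\max}$ denote its smallest and largest eigenvalues. Then $I - \mu\SXX$ is symmetric as well, and its eigenvalues are exactly $\{1 - \mu\lambda_i\}$ where $\lambda_i$ ranges over the eigenvalues of $\SXX$. Since the operator norm of a symmetric matrix equals the maximum modulus of its eigenvalues, we have
\begin{equation*}
\Vert I - \mu\SXX \Vert = \max_i \vert 1 - \mu\lambda_i \vert.
\end{equation*}

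Next, I would pick $\hat\mu = 2/\lambda_{\max}$. For any $0 < \mu < \hat\mu$ and any eigenvalue $\lambda_i$ of $\SXX$, since $0 < \lambda_i \leq \lambda_{\max}$ we get $0 < \mu\lambda_i < 2$, hence $-1 < 1 - \mu\lambda_i < 1$, so $\vert 1 - \mu\lambda_i \vert < 1$. Taking the maximum over $i$ yields $\Vert I - \mu\SXX \Vert < 1$, which is the desired claim.

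There is essentially no obstacle here; the only subtle point is making sure $\hat\mu$ is chosen strictly as $2/\lambda_{\max}$ (and not as $1/\lambda_{\max}$, which would only guarantee a non-strict inequality on one side). Note also that the lemma, as stated, implicitly assumes $\mu > 0$; for negative $\mu$ one would have $\vert 1 - \mu\lambda_i \vert > 1$, so the intended reading is $0 < \mu < \hat\mu$, which is consistent with the GD setup in (\ref{eq:gradient-step}).
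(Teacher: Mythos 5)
Your proof is correct and follows essentially the same route as the paper's: diagonalize the symmetric positive-definite $\SXX$ and bound the eigenvalues of $I-\mu\SXX$, using the fact that the operator norm of a symmetric matrix is its largest eigenvalue modulus. You are in fact slightly more careful than the paper, which asserts that the largest singular value of $I-\mu\SXX$ is $1-\mu s_d$ (true only when $\mu$ is small enough that no $1-\mu s_i$ becomes negative with larger modulus), whereas your explicit choice $\hat\mu = 2/\lambda_{\max}$ covers that edge case cleanly.
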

\begin{proof}
Since  $\SXX$ is positive-definite, we can write $\SXX=USU^\top$ for orthogonal matrix $U$ and the diagonal matrix of singular values $S=diag\{s_{i}\}$. It follows that $I-\mu\SXX=U diag\{1-\mu s_{i}\}U^\top$, a matrix whose largest singular value is $1-\mu s_d$. Since by assumption $s_d>0$, the lemma follows.
\end{proof}

Our last Lemma~\ref{lem:2} claims that eventually, after sufficiently many gradient steps, the expected value of the solution is exactly the closed-form solution of the vector that minimizes the loss.

\begin{lemma}
\label{lem:2}
Assume that $\Vert I-\mu\SXX\Vert < 1$ and $\SXX$ is invertible. If the number of gradient steps $s$ is large enough so that $\Vert I-\mu\SXX\Vert^{s}$ can be neglected, then
\begin{equation}
\label{eq:lim}
\E[\hW^{s}]  \approx \SYX \SXX^{-1} 
\end{equation}
\end{lemma}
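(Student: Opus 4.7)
The plan is to derive a closed-form expression for $\hW^s$ by unrolling the gradient-descent recursion, isolate the transient term that decays with $s$, and then take expectations. Substituting $\dW^s = \hW^s\SXX - \SYX$ into the update in (\ref{eq:gradient-step}) turns it into the linear recursion $\hW^{s+1} = \hW^s A + \mu\SYX$, where $A := I - \mu\SXX$. A straightforward induction on $s$ then yields
\[
\hW^s = \hW^0 A^s + \mu\SYX \sum_{k=0}^{s-1} A^k.
\]

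Next, I would use the hypothesis $\|A\| < 1$ (supplied by Lemma~\ref{lem:norms} for small enough $\mu$) together with invertibility of $\SXX$ to evaluate the geometric sum. Since $I - A = \mu\SXX$ is invertible and $A$ commutes with both $\SXX$ and $\SXX^{-1}$ (they share the same eigenbasis), the standard identity $\sum_{k=0}^{s-1} A^k = (I - A^s)(I - A)^{-1}$ applies, and the geometric term simplifies to $\mu\SYX\sum_{k=0}^{s-1} A^k = \SYX\SXX^{-1}(I - A^s)$. Collecting terms gives the clean decomposition
\[
\hW^s = \SYX\SXX^{-1} + (\hW^0 - \SYX\SXX^{-1})\,A^s,
\]
which exhibits the target vector $\SYX\SXX^{-1}$ plus a contractive correction that depends only on the random initialization.

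Finally, taking expectations and using linearity yields $\E[\hW^s] = \SYX\SXX^{-1} + (\E[\hW^0] - \SYX\SXX^{-1})A^s$. The lemma then follows by invoking the stated assumption that $\|A^s\|\le\|A\|^s$ is negligible for sufficiently large $s$, which forces the second term to vanish and delivers (\ref{eq:lim}).

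There is no genuine obstacle here: the argument is essentially the closed form of a first-order linear matrix recursion with a contractive multiplier. The only minor points requiring care are that $\hW$ is a row vector, so the powers of $A$ must sit on the right throughout the unrolling, and that the geometric-series identity requires commutativity of $A$ with $(I - A)^{-1} = (\mu\SXX)^{-1}$, which is immediate from $A = I - \mu\SXX$. As a side remark, the same derivation yields the non-asymptotic bound $\|\E[\hW^s] - \SYX\SXX^{-1}\|\le \|\E[\hW^0] - \SYX\SXX^{-1}\|\cdot\|A\|^s$, so the ``$\approx$'' in the statement could be sharpened into an explicit geometric rate of convergence.
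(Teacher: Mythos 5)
Your proposal is correct and follows essentially the same route as the paper: unroll the linear recursion $\hW^{s+1}=\hW^s(I-\mu\SXX)+\mu\SYX$, evaluate the geometric series using $\Vert I-\mu\SXX\Vert<1$, and discard the term carrying $(I-\mu\SXX)^s$. The only (harmless) difference is that you let the contraction $A^s$ annihilate the initialization term, whereas the paper kills it by invoking $\E[\hW^0]=0$ before summing the series; your observation that this yields an explicit geometric convergence rate is a nice bonus but not a different argument.
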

\begin{proof}
Starting from (\ref{eq:gradient-step}), we can show that 
\begin{equation*}
\hW^{s} = \hW^0(I-\mu\SXX)^{s-1} + \mu\SYX \sum_{k=1}^{s-1} (I-\mu\SXX)^{k-1} 
\end{equation*}
Since $\E(\hW^0)=0$
\begin{equation*}
\begin{split}
    \E(\hW^{s}) &= \E(\hW^0)(I-\mu\SXX)^{s-1} + \mu\SYX \sum_{k=1}^{s-1} (I-\mu\SXX)^{k-1} \\
    &= \mu\SYX \sum_{k=1}^{s-1} (I-\mu\SXX)^{k-1} 
\end{split}
\end{equation*}
Given the lemma's assumptions, this expression can be evaluated and simplified:
\begin{equation*}
\begin{split}
\E(\hW^{s}) &=  \mu\SYX  [I-(I-\mu\SXX)]^{-1} [I-(I-\mu\SXX)^{s-1}]  \\
&= \SYX \SXX^{-1}-\SYX \SXX^{-1}(I-\mu\SXX)^{s-1}  \\ 
&\approx \SYX \SXX^{-1}  
\end{split}
\end{equation*}
\end{proof}

From (\ref{eq:agreement-s}) it follows that a decrease in inter-model agreement at step $s$, which is implied by increased test variance among models, is indicated by the following inequality:
\begin{equation}
\begin{split}
\Cr =& DisAg(s)-DisAg(s-1) \\
=& \frac{1}{2Q^2}\sum_{i,j=1}^Q  \Vert \tilde\bE(i,t)-\tilde\bE(j,t)\Vert^2  -\\
&\quad\frac{1}{2Q^2}\sum_{i,j=1}^Q \Vert \bE(i,t)-\bE(j,t)\Vert^2  ~>~0
\end{split}
\label{eq:agreement}
\end{equation}

\begin{result*}
Assume that all models see the same training set, denoted as $X(i)=X~\forall i\in[Q]$, and that the training data covariance matrix $\SXX$ is full rank.

We make the following asymptotic assumptions, which are loosely phrased but can be rigorously defined with additional notations:
\setlist{nolistsep}
\begin{enumerate}[leftmargin=0.35cm,noitemsep]
\item
The learning rate $\mu$ is small enough so that $\Vert I-\mu\SXX\Vert < 1$ (from Lemma~\ref{lem:norms}), and additionally we can neglect terms that are $O(\mu^2)$. 
\item
The number of gradient steps $s$ is large enough so that $\Vert I-\mu\SXX\Vert^{s}$ can be neglected. 
\item
The number of models $Q$ is large enough so that using the law of large numbers, we get $\frac{1}{Q}\sum_{i=1}^Q  \hW(i) \approx \E[\hW] $.
\end{enumerate}
Finally, we assume that overfit occurs at time $s$ in all the models of the ensemble. In other words, at time $s$ the generalization error does not decrease in all the models.

When these assumptions hold, the agreement between the models decreases.
\end{result*}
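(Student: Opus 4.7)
The plan is to expand $\Cr$ from (\ref{eq:agreement}) to first order in $\mu$, identify the leading-order term as $-2\mu$ times an empirical cross-covariance between $\dW(i)$ and $\DW(i,t)$ across the ensemble, and then read off the sign from Lemma~\ref{lem:overfit}.

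First, substitute the GD update $\tilde\hW(i)=\hW(i)-\mu\dW(i)$ into the error vectors to obtain $\tilde\bE(i,t)-\tilde\bE(j,t)=[\bE(i,t)-\bE(j,t)]-\mu[\dW(i)-\dW(j)]X(t)$, and then square. Using the identity $X(t)\bE(k,t)^\top=\DW(k,t)^\top$ from (\ref{eq:cross}) to reshape the cross term, and discarding $O(\mu^2)$ terms via asymptotic assumption (1), yields
\begin{equation*}
\Cr = -\frac{\mu}{Q^2}\sum_{i,j=1}^Q [\dW(i)-\dW(j)][\DW(i,t)-\DW(j,t)]^\top + O(\mu^2).
\end{equation*}
A routine symmetrization of the double sum then rewrites this as $-2\mu$ times an empirical cross-covariance,
\begin{equation*}
\Cr \approx -2\mu\Bigl[\frac{1}{Q}\sum_{i=1}^Q \dW(i)\DW(i,t)^\top - \bar B\,\bar A^\top\Bigr],
\end{equation*}
where $\bar B=\frac{1}{Q}\sum_i \dW(i)$ and $\bar A=\frac{1}{Q}\sum_i \DW(i,t)$.

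The next step is to show the mean-correction term $\bar B\,\bar A^\top$ vanishes to leading order. Because all networks share the same training set, $\dW(i)=\hW(i)\SXX-\SYX$, so $\bar B=\bigl(\frac{1}{Q}\sum_i \hW(i)\bigr)\SXX-\SYX$. Asymptotic assumption (3) then gives $\bar B \approx \E[\hW]\SXX-\SYX$, and Lemma~\ref{lem:2} (valid under assumptions (1) and (2)) gives $\E[\hW]\SXX \approx \SYX$. Hence $\bar B\approx 0$ and we are left with
\begin{equation*}
\Cr \approx -\frac{2\mu}{Q}\sum_{i=1}^Q \dW(i)\cdot\DW(i,t).
\end{equation*}
The conclusion now follows immediately from Lemma~\ref{lem:overfit}: simultaneous overfit in all $Q$ models at step $s$ means exactly $\dW(i)\cdot\DW(i,t)<0$ for every $i$, which forces $\Cr > 0$ and thus $DisAg(s) > DisAg(s-1)$.

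The main obstacle I anticipate is the simultaneous bookkeeping of three distinct small-error regimes -- the $O(\mu^2)$ Taylor remainder, the $\|I-\mu\SXX\|^{s}$ remainder inherited from Lemma~\ref{lem:2}, and the $O(Q^{-1/2})$ law-of-large-numbers fluctuation implicit in assumption (3) -- all of which must be argued strictly smaller than the leading signal $\frac{1}{Q}\sum_i \dW(i)\cdot\DW(i,t)$. Given the loose phrasing of assumptions (1)--(3), this is a matter of careful asymptotic accounting rather than a genuine technical obstacle; the combinatorial-algebraic core of the argument is essentially routine.
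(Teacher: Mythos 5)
Your proposal is correct and follows essentially the same route as the paper's proof: the first-order expansion of $\Cr$ in $\mu$, the split into a diagonal term $\frac{1}{Q}\sum_i \dW(i)\cdot\DW(i,t)$ and a mean-product correction (the paper's $\Cr''$ and $\Cr'$), the vanishing of the correction via assumption (3) and Lemma~\ref{lem:2}, and the sign conclusion from Lemma~\ref{lem:overfit}. Your ``empirical cross-covariance'' symmetrization is just a repackaging of the paper's rearrangement of the double sum, so the two arguments coincide term by term.
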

\begin{proof}

(\ref{eq:agreement}) can be rearranged as follows
\begin{equation*}
\begin{split}
\Cr =& \frac{1}{2Q^2}\sum_{i,j=1}^Q  \Vert [\bE(i,t)-\mu\DW (i,i)X(t)]-[\bE(j,t)-\\
&\qquad \mu\DW (j,j)X(t)]\Vert^2  - \frac{1}{2Q^2}\sum_{i,j=1}^Q \Vert \bE(i,t)-\bE(j,t)\Vert^2  \\
=& \frac{1}{Q^2} \sum_{i,j=1}^Q  -\mu [\bE(i,t)-\bE(j,t)]\cdot[\DW (i,i)X(t)]-\\
&\qquad \DW (j,j)X(t)] +O(\mu^2) \\
= &  \frac{\mu}{Q^2}\sum_{i,j=1}^Q \left [ \DW(i,i)\cdot \DW(j,t)+ \DW(j,j)\cdot \DW(i,t)\right ] -\\
&\qquad\left [ \DW(i,i)\cdot \DW(i,t)+ \DW(j,j)\cdot \DW(j,t)\right ] +O(\mu^2)\\
\end{split}
\end{equation*}
where the last transition follows from $\bE(i,t)X(t)^\top\!=\DW(i,t)$. Using assumption 2
\begin{equation}
\Cr =\mu (\Cr' - \Cr'')+O(\mu^2) \approx\mu (\Cr' - \Cr'')
\label{eq:C}
\end{equation}
where
\begin{equation}
\begin{split}
\label{eq:cr''}
\Cr'' &=\frac{1}{Q^2}\sum_{i,j=1}^Q\left [ \DW(i,i)\cdot \DW(i,t)+ \DW(j,j)\cdot \DW(j,t)\right ] \\
&= \frac{2}{Q}\sum_{i=1}^Q\DW(i,i)\cdot \DW(i,t)
\end{split}
\end{equation}
and
\begin{equation}
\label{eq:cr'}
\begin{split}
\Cr' =&\frac{1}{Q^2}\sum_{i,j=1}^Q \left [ \DW(i,i)\cdot \DW(j,t)+ \DW(j,j)\cdot \DW(i,t)\right ] \\
=&\frac{1}{Q}\sum_{i=1}^Q  \DW(i,i)\cdot \frac{1}{Q}\sum_{j=1}^Q\DW(j,t)+\\
&\qquad\frac{1}{Q}\sum_{j=1}^Q\DW(j,j)\cdot \frac{1}{Q}\sum_{i=1}^Q\DW(i,t) \\
=&\frac{1}{Q}\sum_{i=1}^Q  \DW(i,i)\cdot \frac{2}{Q}\sum_{j=1}^Q\DW(j,t)
\end{split}
\end{equation}

\begin{table*}[htb]
\small
   \centering
    \begin{tabular}{l| c||c||c||c}
    \multicolumn{1}{ c |}{Method/\textbf{Test}} & \multicolumn{1}{ c ||}{\textbf{Adversarial network}} & \multicolumn{1}{ c || }{\textbf{Transfer learning}} & \multicolumn{1}{ c || }{\textbf{Different architectures}} & \multicolumn{1}{ c }{\textbf{Different optimizer}} \\ 
    \hline
    \emph{single network}  & 52.0 & 54.9 & 52.2 & 29.4 \\
    \emph{ensemble} & 59.9 & 63.9 & 61.4 & 42.6\\
    \emph{MAP} & $\mathbf{68.7}$ & $\mathbf{70.1}$& $\mathbf{68.9}$& $\mathbf{58.5}$ \\
    \end{tabular}

    \caption{Additional ablation results. In the adversarial network setting, one network provides random predictions. In general, the single network shows the best network in the ensemble.}
    \label{table:additionalablation}
     \vspace{-.5em}
\end{table*}

Next, we prove that $\Cr'$ is approximately 0. We first deduce from assumptions 1 and 4 that
\begin{equation*}
\begin{split}
\frac{1}{Q}&\sum_{i=1}^Q  \DW(i,i) = \frac{1}{Q}\sum_{i=1}^Q  \hW (i) \SXX (i) -\SYX (i) \\
&= \left (\frac{1}{Q}\sum_{i=1}^Q  \hW (i) \right ) \SXX -\SYX \approx \E[\hW] \SXX -\SYX
\end{split}
\end{equation*}
From assumption 3 and Lemma~\ref{lem:2}, we have that $\E[\hW] \approx \SYX \SXX^{-1}$. Thus
\begin{equation*}
\begin{split}
\frac{1}{Q}\sum_{i=1}^Q  \DW(i,i) &\approx \E[\hW] \SXX -\SYX \\
&\approx \SYX \SXX^{-1}\SXX -\SYX = 0
\end{split}
\end{equation*}
From this derivation and (\ref{eq:cr'}) we may conclude that $\Cr' \approx 0$. Thus
\begin{equation}
\label{eq:c-agree}
\Cr \approx -\mu \Cr''= -\mu\frac{2}{Q}\sum_{i=1}^Q\DW(i,i)\cdot \DW(i,t)
\end{equation}

If overfit occurs at time $s$ in all the models of the ensemble, then $\Cr>0$ from Lemma~\ref{lem:overfit} and (\ref{eq:c-agree}). From (\ref{eq:agreement}) we may conclude that the inter-model agreement decreases, which concludes the proof.

\end{proof}

\stepcounter{section}

\section{Additional ablation results}
\label{sec:ablation-other}

We present in this appendix additional ablation results, demonstrating that our method is not dependent on specific architecture, optimizer and hyper-parameters, and is even robust to the presence of an adversarial network in the ensemble. 
%
Fig.~\ref{fig:ablation} shows results from experiments with different network architecture or learning rate scheduler. In all cases, our method still successfully eliminates the overfit. 

We have also tested our method on a few other settings of interest:
\begin{itemize}
    \item We tested our method on ensembles of different networks architectures/sizes trained in the same manner, and saw that the method is still effective in its performance improvement, showing comparable improvement to the one presented in Table~\ref{table:synthesized}.
    \item Transfer learning - we tested our method when initial weights are taken from pretraining (Imagenet), and saw  our method still provides large improvement in its final performance.
    \item Using different optimizer - we also tested our method when using a different optimizer (adamw) - our method still shows large improvement in performance.
    \item Having an adversarial network in the ensemble - interestingly, we also saw that due to the usage of multiple networks, our method is robust to the usage of a network that is poorly trained, or even provides random predictions, which is another advantage of our method.
\end{itemize}

The full results appear in Table~\ref{table:additionalablation}. All experiments were done with densenet121 trained on Cifar100 with 40\% symmetric noise, except transfer learning, where our network choice was resnet18 pre-trained on Imagenet.

\begin{figure}[htbp]
     \vspace{-1.0em}

    \centering
    \begin{subfigure}[b]{0.44\linewidth}
        \includegraphics[width=\linewidth]{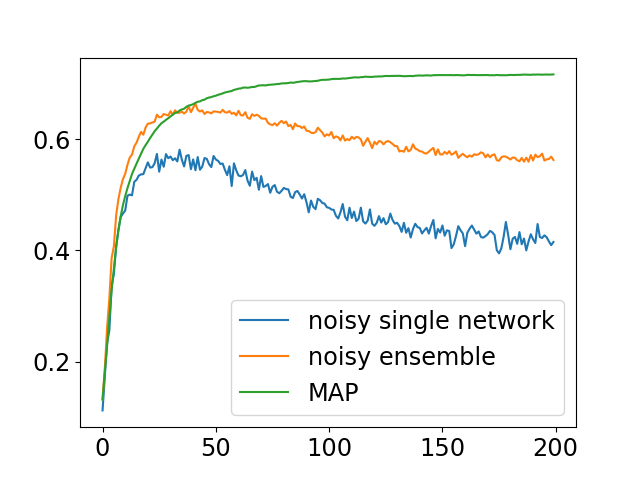}
    \vspace{-.5cm}
        \caption{Cifar100, 40\% noise, const lr}
        \label{subfig:ablationc100sym40constantLR}
    \end{subfigure}
    \begin{subfigure}[b]{0.44\linewidth}
        \includegraphics[width=\linewidth]{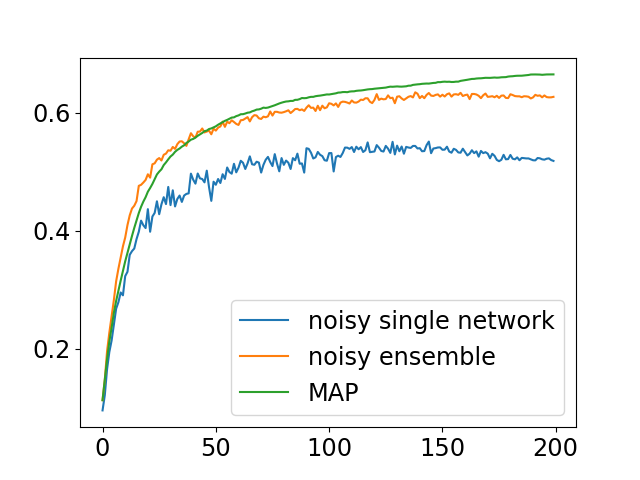}
    \vspace{-.5cm}
        \caption{Cifar100, 40\% noise, resnet32}
        \label{subfig:ablationc100sym40resnet32}
    \end{subfigure}
     \caption[ablation]{Test accuracy of MAP, different lr (a) and architecture (b).} 
     \label{fig:ablation}
     \vspace{-1.0em}
\end{figure}

\section{Implementation details}
\label{app:implementation}

\paragraph{Implementation details} We used the following datasets in our experiments: Cifar10/100 \citep{krizhevsky2009learning} with $50,000$ training images of $10/100$ classes respectively; TinyImagenet \citep{le2015tiny} with 100,000 training images of 200 classes; and Imagenet100 (cf. \citep{van2020scan}), which is a subset of 100 classes from Imagenet \citep{deng2009imagenet} with 1300 training images per class.
In these datasets, in order to amplify overfit, we used standard methods for injecting label noise (see above). As in all empirical studies and in order to evaluate the generalization error correctly, 
the test set is kept clean. We used three additional datasets, Clothing1M \citep{xiao2015learning}, Webvision \citep{li2017webvision} and Animal10N \citep{animal10n}), which are known to contain real label noise caused by automatic labeling (Clothing1M and Webvision) or human annotation mistakes (Animal10N). In Webvision, following \citep{pleiss2020identifying}, we used only the first 50 classes from Flickr and Google - around 100,000 images. For the NLP classification tasks, we used tasks from the GLUE benchmark \citep{glue}, including MNLI, QNLI and QQP. 

In our experiments the ensemble contained 5 networks and was trained for 200 epochs (except clothing, on which it was trained for 80 epochs). We used batch-size of 32, learning rate of 0.01, SGD optimizer with momentum of 0.9 and weight decay of 5e-4, cosine annealing scheduler, and standard augmentations (horizontal flip, random crops). For Cifar10/100, TinyImagenet and Imagenet100 we used DenseNet with width of 32 and batch normalization layers. For Clothing1M we used Resnet50 pretrained on Imagenet, while for Animal10N and Webvision we trained resnet50 from scratch. For elr \citep{liu2020early} we used their implementation and hyperparameters. For HyperEnsembles we used different values of learning rate, ranging from 0.01 to 0.1 at equal distances, using ensembles of 4 networks. For Batchensembles, we used 4 networks of wide-resnet, at the recommended settings. For the NLP classification tasks we used bert-base-cased \citep{devlin2018bert} with hugging face \citep{huggingface} implementation as our model, with batch size of 32, learning rate of 2e-5, max sequence length of 128, and 4 training epochs. We tested the networks every $\frac{1}{10}$ epoch. Experiments were conducted on a cluster of GPU type AmpereA10. Each experiment reports the mean and standard error (ste) results over 3 repetitions.
\vspace{-1.0em}

\end{document}